\documentclass[twoside]{article}

\usepackage[accepted]{aistats2020}
\usepackage{subcaption}
\usepackage[utf8]{inputenc} 
\usepackage[T1]{fontenc}    
\usepackage[hidelinks]{hyperref}       
\usepackage{url}            
\usepackage{booktabs}       
\usepackage{amsfonts}       
\usepackage{nicefrac}       
\usepackage{microtype}      
\usepackage{algorithm}
\usepackage{algpseudocode}
\usepackage{algorithmicx}
\usepackage{amsmath}
\usepackage{amsthm}
\usepackage{xspace}
\usepackage{amssymb}
\usepackage{color}
\usepackage{tikz}
\usepackage{natbib}
\usetikzlibrary{arrows}	
\usepackage{pgfplots}
\usepgfplotslibrary{groupplots} 
\usetikzlibrary{pgfplots.groupplots}
\usetikzlibrary{matrix,arrows,decorations.pathmorphing}

\newcommand{\ssp}{\vspace{-0.5mm}}
\newcommand{\msp}{\vspace{-1.2mm}}
\newcommand{\lsp}{\vspace{-2.0mm}}
\usepackage{bm}


%
%



\bibliographystyle{apalike}

\newcommand{\ben}{\begin{enumerate*}}
\newcommand{\een}{\end{enumerate*}}
\newcommand{\beq}{\begin{eqnarray}}
\newcommand{\eeq}{\end{eqnarray}}
\newcommand{\bit}{\begin{itemize*}}
\newcommand{\eit}{\end{itemize*}}



\newcommand{\hide}[1]{}



\newtheorem{theorem}{Theorem} 









\newcommand{\indep}[1]{\perp}

\DeclareMathOperator*{\argmax}{arg\,max}









\newcommand{\appropto}{\mathrel{\vcenter{
  \offinterlineskip\halign{\hfil$##$\cr
    \propto\cr\noalign{\kern2pt}\sim\cr\noalign{\kern-2pt}}}}}





\renewcommand{\eqref}[1]{Equation~(\ref{#1})}


\newtheorem{corollary}[theorem]{Corollary}
\newtheorem{prop}[theorem]{Proposition}
\newtheorem{lemma}[theorem]{Lemma}
\newcommand{\E}{\mathbb{E}}
\newcommand{\narm}{K}
\newcommand{\tol}{\delta}
\newcommand{\sammean}[1]{\mu_{#1}}
\newcommand{\estmean}[1]{\hat{\mu}_{#1}}
\newcommand{\pbbeat}[1]{p_{#1}}
\newcommand{\estprob}[1]{\hat{p}_{#1}}
\newcommand{\dlabel}{y}

\newcommand{\dlabelrv}{Y}
\newcommand{\comprv}{Z}
\newcommand{\ncomp}[1]{n_{#1}}
\newcommand{\nwin}[1]{w_{#1}}
\newcommand{\sgpara}{R}
\newcommand{\samiter}{i}
\newcommand{\samiterp}{j}
\newcommand{\dlcpl}{H_l}
\newcommand{\thres}{\tau}
\algrenewcommand\algorithmicrequire{\textbf{Input:}}
\algrenewcommand\algorithmicensure{\textbf{Output:}}
\newcommand{\nlabel}[1]{m_{#1}}
\newcommand{\bandsum}[1]{s_{#1}}
\newcommand{\problemname}{TBP-DC\xspace}
\newcommand{\algoname}{\textsf{RS}\xspace}

\DeclareMathOperator*{\argmin}{arg\,min}
\newcommand{\posset}{S_{\thres}}
\newcommand{\negset}{S^c_{\thres}}
\newcommand{\estposset}{\hat{S}_{\thres}}
\newcommand{\estnegset}{\hat{S}^c_{\thres}}
\newcommand{\labelgap}[1]{\Delta^l_{#1}}
\newcommand{\compgap}[1]{\Delta^c_{#1}}

\definecolor{DSgray}{cmyk}{0,1,0,0}

\newcommand{\itercount}{t}
\newcommand{\bsvar}{k} 

\newcommand{\comperr}{\gamma}
\newcommand{\workset}{S}

\newcommand{\bs}{i}
\newcommand{\upthres}{\samiter_u}
\newcommand{\lowthres}{\samiter_l}
\newcommand{\bsalgoname}{\textsf{Binary-Search}\xspace}
\newcommand{\figurelabelname}{\textsf{Figure-Out-Label}\xspace}

\newcommand{\algo}{\mathcal{A}}
\newcommand{\meanvec}{\bm{\mu}}
\newcommand{\dueltime}[1]{D_{#1}}
\newcommand{\pulltime}[1]{L_{#1}}
\newcommand{\event}{\mathcal{E}}
\newcommand{\samiterint}{k}
\newcommand{\numduel}{n_{\text{duel}}}
\newcommand{\numpull}{n_{\text{pull}}}

\begin{document}

%

%
\runningauthor{Yichong Xu, Xi Chen, Aarti Singh, Artur Dubrawski}
\twocolumn[

\aistatstitle{Thresholding Bandit Problem with Both Duels and Pulls}

\aistatsauthor{Yichong Xu$^1$, Xi Chen$^2$ Aarti Singh$^1$, Artur Dubrawski$^1$}

\aistatsaddress{$^1$Machine Learning Department, Carnegie Mellon University
$^2$Stern School of Business, New York University} 
]

\begin{abstract}
  The Thresholding Bandit Problem (TBP) aims to find the set of arms with mean rewards greater than a given threshold. We consider a new setting of TBP, where in addition to pulling arms, one can also \emph{duel} two arms and get the arm with a greater mean. In our motivating application from crowdsourcing,  dueling two arms can be more cost-effective and time-efficient than direct pulls. We refer to this problem as TBP with Dueling Choices (TBP-DC). This paper provides an algorithm called Rank-Search (\algoname) for solving TBP-DC by alternating between ranking and binary search. We prove theoretical guarantees for \algoname, and also give lower bounds to show the optimality of it. Experiments show that \algoname outperforms previous baseline algorithms that only use pulls or duels.
\end{abstract}

\lsp
\section{Introduction}
\ssp
The Thresholding Bandit Problem (TBP, \citep{locatelli2016optimal}) is an important pure-exploration multi-armed bandit (MAB) problem. Specifically, given a set of $\narm$ arms with different mean rewards, the TBP aims to find arms whose mean rewards are above a pre-set threshold of $\thres$. The TBP has a wide range of applications, such as anomaly detection, candidate filtering, and crowdsourced classification. For example,  a popular crowdsourced classification model \citep{Chen:16KL,chen2015statistical} assumes that there are $\narm$ items with the latent true labels $\theta_i \in \{0, 1\}$ for each item. The labeling difficulty of the $i$-th item is characterized by its soft label $\mu_i \in [0,1]$, which is defined as the probability that a random crowd worker will label the $i$-th item as positive.  It is clear that the item is easy to label when $\mu_i$ is close to 0 or 1, and difficult when $\mu_i$ is close to 0.5. In MAB, $\mu_i$ is the mean reward of arm $i$, and pulling this arm leads to a Bernoulli observation with mean $\mu_i$. Moreover, it is natural to assume that the soft label $\mu_i$ is consistent with the true label, i.e., $\mu_i \geq 0.5$  if and only if $\theta_i=1$. Therefore,  identifying items belonging to class $1$ is equivalent to detecting those arms with $\mu_i> \thres$ with $\thres=0.5$. 

Existing literature on TBP considers the setting that only solicits information from pulling arms directly. However, in many applications of TBP, comparisons/duels can be obtained at a much lower cost than direct pulls. In crowdsourcing, a worker often compares two items more quickly and accurately than labeling them separately. It will be cheaper and time efficient to ask a worker which image is more relevant to a query as compared to asking for an absolute relevance score of an image (see, e.g., \cite{shah2016estimation}). Another example is in material synthesis, a pull will need an expensive synthesis of the material, whereas duels can be carried out easily by querying experts. In such settings, directly pulling an arm is expensive and could incur a large sample complexity since each arm needs to be pulled a number of times. This paper considers two sources of information: in addition to direct pulls of arms as in the classical TBP, one can also \emph{duel} two arms to find out the arm with a greater mean at a lower cost. We refer to this problem as the TBP with Dueling Choices (\problemname), since dueling and pulling are both available in each round.  



It is important to note that some direct pulls are still necessary for solving a TBP even if one can duel two arms. Without direct assessments of arms, we can at best rank all the arms with duels. However,  we then cannot know the target threshold $\thres$ and therefore cannot identify a boundary on the ranking.
On the other hand, using an appropriate dueling strategy, the number of required direct pulls can be much lower than that in the classical TBP setting, where only direct pulling is available. We further note that \problemname is also different from the top-K arm identification problem considered in whether MAB (see, e.g., \cite{Bubeck:13}, \cite{Chen:14PAC}, \cite{Chen:17adaptive}) or dueling bandits (see, e.g., \cite{mohajer2017active}), because the number of arms with means greater than the threshold $\thres$ is unknown to us. 

A straightforward way to solve the TBP-DC problem is to utilize an existing ranking algorithm such as ActiveRank \citep{heckel2016active} to rank all the arms, and then use a binary search to find the boundary. However, this method is impractical because it can be very hard to differentiate arms with similar means (e.g., equally good images, similar quality materials). These arms might be far from the threshold and it is actually unnecessary to differentiate them. We instead take an iterative approach; We develop the Rank-Search (\algoname) algorithm for \problemname, which alternates between refining the rank over all items using duels and a binary search process using pulls to figure out the threshold among ranked items. We interleave the ranking and searching step so that we do not waste time differentiating equally good arms.

\textbf{Our contributions.} 
First, in Section~\ref{sec:algo}, we analyze the number of duels and pulls required for \algoname under the fixed confidence setting, i.e., to recognize the set of arms with reward larger than $\thres$ with probability at least $1-\tol$.  To better illustrate our main idea, we further provide concrete examples, which show that the proposed \algoname only requires $O(\log^2 K)$ direct labels, while the classical TBP requires at least $\Omega(K)$ labels (see Section~\ref{sec:example_upper}). Section~\ref{sec:lower} shows  complementary lower bounds that \algoname is near-optimal in both duel and pull complexity. Finally, we provide practical experiments to demonstrate the performance of \algoname.


\textbf{Related Works.} TBP is a special case of the pure-exploration combinatorial MAB problem. As with other pure-exploration MAB problems\citep{Bubeck:13} , algorithms for combinatorial bandits fall into either \emph{fixed-budget} or \emph{fixed-confidence} categories. In the former setting, the algorithm is given a time horizon of $T$ and tries to minimize the probability of failure. In the latter setting, the algorithm is given a target failure probability and tries to minimize the number of queries. For TBP, the CLUCB algorithm~\citep{chen2014combinatorial}  can solve TBP under the pull-only and fixed confidence setting, with optimal sample complexity. \citep{chen2014combinatorial} also develops the CSAR algorithm for the fixed-budget setting which can also be used for TBP. The result was improved by recent followup work~\citep{locatelli2016optimal,mukherjee2017thresholding} under the fixed budget setting. \cite{chen2015statistical} considered TBP in the context of budget allocation for crowdsourced classification in the Bayesian framework. 

Motivated by crowdsourcing and other applications, this paper proposes a new setup since we allow both pulling one arm and dueling two arms in each round, with the underlying assumption that dueling is more cost-effective than pulling. To the best of our knowledge, this setting has not been considered in the previous work.  Most close in spirit to our work is a series of recent papers \citep{kane2017active,xu2018nonparametric,xu2017noise}, which consider using pairwise comparisons for learning classifiers. The methods in those papers are however not directly applicable to \problemname because their final goal is to learn a classification boundary, instead of labeling each item without feature information. 

\newcommand{\armset}{\mathcal{A}}
\newcommand{\samdistr}[1]{\nu_{#1}}
\newcommand{\timeiter}{t}
\newcommand{\prefmat}{M}
\newcommand{\samgap}[1]{\Delta_{#1}}
\msp
\section{Problem Setup}
\ssp
Suppose there are  $\narm$ arms,  which are denoted by  $\armset=[\narm]=\{1,2,...,\narm\}$. Each arm $\samiter\in \armset$ is associated with a mean reward $\sammean{\samiter}$. Without loss of generality, we will assume that $\sammean{1}\leq \sammean{2}\leq \cdots \leq \sammean{\narm}$. Given a target threshold $\thres$, our goal is to identify the positive set $\posset=\{i:\sammean{i}\geq \thres\}$ and the negative set $\negset=\{i:\sammean{i}<\thres\}$.  

\textbf{Modes of interactions.} Each instance of \problemname is uniquely defined by the tuple $(\prefmat,\meanvec)$, where $\prefmat$ is the preference matrix (defined below) and $\meanvec=\{\sammean{\samiter} \}_{\samiter=1}^\narm$ is the mean reward vector. 
In each round of our algorithm, we can choose one of two possible interactions:\\
\ssp
\begin{itemize}
	\item \msp
	\textbf{Direct Queries (Pulls)}: We choose an arm $\samiter\in \armset$ and get a (independent) noisy reward $Y$ from arm $\samiter$. We assume that each arm $\samiter$ is associated a reward distribution $\samdistr{\samiter}$ with mean $\sammean{\samiter}$, and that $\samdistr{\samiter}$ is 
	sub-Gaussian with parameter $\sgpara$:
	\(\E_{Y\sim \samdistr{\samiter}}[\exp(t\dlabelrv - t\E[\dlabelrv])] \leq \exp(\sgpara^2t^2/2)\)
	for all $t\in \mathbb{R}$. 
	The definition of sub-Gaussian variables includes many common distributions, such as Gaussian distributions or any bounded distributions (e.g., Bernoulli distribution). We denote by $\labelgap{\samiter}=|\sammean{\samiter}-\thres|$ the gap between arm $\samiter$ and the threshold.\\
	\item \msp
	\textbf{Comparisons (Duels)}: We can also choose to duel two arms $\samiter, \samiterp\in \armset$ and obtain a random variable $\comprv$, with $\comprv=1$ indicating the arm $\samiter$ has a larger mean reward than $\samiterp$ and $\comprv=0$ otherwise. Let $\prefmat_{\samiter\samiterp}\in [0,1]$ characterize the probability that a random worker believes that  arm $\samiter$ is ``more positive'' than arm $\samiterp$. The outcome of duels is therefore characterized by the matrix $\prefmat$. The (Borda) score of each arm in dueling is defined as 
	\begin{equation}\label{def:p}
	\pbbeat{\samiter}:=\frac{1}{\narm-1}\sum_{\samiterp\in [\narm]\setminus \{\samiter\}} M_{\samiter\samiterp},
	\end{equation}
	i.e., the probability of arm $\samiter$ beating another randomly chosen arm $\samiterp$. 
	
	In contrast to previous work \citep{shah2016estimation,szorenyi2015online,yue2012k} that usually assumes parametric or structural assumptions on $\prefmat$, we allow an arbitrary preference matrix $\prefmat$; the only assumption is that the score of any positive arm is larger than any negative arm, i.e., $\pbbeat{\samiter}>\pbbeat{\samiterp}, \forall \samiter\in \posset, \samiterp\in \negset$.
	We note that this is a very weak condition since arbitrary relations within the positive and negative sets are allowed. This assumption also holds if $(1,2,...,\narm)$ is the Borda ranking of $\prefmat$; or the underlying comparison model follows the Strong Stochastic Transitivity (SST, \citep{fishburn1973binary,shah2016stochastically}). 
	We note that the problem is very difficult under this assumption: For example, even if $\mu_i$ (knowledge from pulls) are bounded away from $\tau$ by a constant, the $p_i$ (knowledge from duels) may be arbitrarily close, hence making the problem much harder.
\end{itemize}
\msp

Taking crowdsourced binary classification as an example, $\dlabelrv_i \in \{0,1\}$ would correspond to a binary label of the $i$-th item obtained from a worker, where $\sammean{\samiter}=\Pr_{\dlabelrv_i \sim \samdistr{\samiter}}[\dlabelrv=1]$. For this case we have $\thres=1/2$. Dueling outcome $\comprv_{ij}$ will correspond to asking a worker to compare item $\samiter$ with item $\samiterp$ and $\comprv_{ij}=1$ if the worker claims that item $\samiter$ is ``more positive'' than item $\samiterp$.

\textbf{The fixed-confidence setting.} Given a target error rate $\tol$, our goal is to recover the sets $\estposset$ and $\estnegset$, such that $\Pr[\posset=\estposset, \negset=\estnegset]\geq 1-\tol$, with as fewer pulls and duels as possible. Since in practice duels are often cheaper than pulls, we want to minimize the number of pulls while also avoiding too many duels.

\msp
\subsection{Problem Complexity}
\ssp

\newcommand{\score}{\tau}
\newcommand{\gaplo}{\Delta}
\newcommand{\gapup}{\delta}
\newcommand{\diffcomp}{\bar{\Delta}^c}
\newcommand{\compgood}{T}
\newcommand{\compcpl}{H_{c,2}}
\newcommand{\compcpllow}{H_{c,1}}

\begin{figure*}
	\centering
	\begin{subfigure}[b]{0.48 \textwidth}
		\centering
		\begin{tikzpicture}[>=latex,scale=1.0,baseline={(0.5,0.4)}]
		\def\recw{0.2}
		\draw [rounded corners,orange] (2.7+\recw, -\recw)--(2.7+\recw,+\recw)--(1-\recw, +\recw)--(1-\recw,-\recw)--cycle;
		\draw [rounded corners,cyan] (5.1+\recw, -\recw)--(5.1+\recw,+\recw)--(4.0-\recw, +\recw)--(4.0-\recw,-\recw)--cycle;    
		\draw (0.4,0.4) -- (6.0,0.4); 
		\foreach \x/\xtext in {1/\sammean{1}, 2/\sammean{2},
			2.7/\sammean{3}, 4.0/\sammean{4}, 5.1/\sammean{5}}{ 
			\draw (\x,0.4cm+3pt) -- (\x,0.3);
			\node at (\x,0.2) [anchor=north] {$\xtext$};
		}
		\draw [red](3.5,0.4cm+3pt) -- (3.5,0.3);
		\node at (3.5,0.2) [anchor=north,red] {$\thres$};
		
		\draw [decorate,decoration={brace},xshift=0pt,yshift=1.3cm]
		(1,0) -- (3.5,0)node [black,midway,yshift=0.4cm] {
			$\labelgap{1}$
		};

		\draw [decorate,decoration={brace},xshift=0pt,yshift=0.7cm]
		(2.7,0) -- (3.5,0)node [black,midway,yshift=0.4cm] {
			$\labelgap{3}$
		};
		\draw [decorate,decoration={brace},xshift=0pt,yshift=0.7cm]
		(3.5,0) -- (5.1,0)node [black,midway,yshift=0.4cm] {
			$\labelgap{5}$
		};        
		\draw [decorate,decoration={brace},xshift=0pt,yshift=1.3cm]
		(3.5,0) -- (4.0,0)node [black,midway,yshift=0.4cm] {
			$\labelgap{4}$
		};
		
		\end{tikzpicture}
	\end{subfigure}
	\hfill
	\begin{subfigure}[b]{0.48 \textwidth}
		\centering
		\begin{tikzpicture}[>=latex,scale=1.0,baseline={(0.5,0)}]
		\def\recw{0.2}
		\draw [rounded corners,orange] (2.7+\recw, -\recw)--(2.7+\recw,+\recw)--(1-\recw, +\recw)--(1-\recw,-\recw)--cycle;
		\draw [rounded corners,cyan] (5.1+\recw, -\recw)--(5.1+\recw,+\recw)--(4.0-\recw, +\recw)--(4.0-\recw,-\recw)--cycle;    
		\draw (0.4,0.4) -- (5.8,0.4); 
		\foreach \x/\xtext in {1/\pbbeat{1}, 2/\pbbeat{2},
			2.7/\pbbeat{3}, 4.0/\pbbeat{4}, 5.1/\pbbeat{5}}{ 
			\draw (\x,0.4cm+3pt) -- (\x,0.3);
			\node at (\x,0.2) [anchor=north] {$\xtext$};
		}
		
		\draw [decorate,decoration={brace},xshift=0pt,yshift=1.3cm]
		(1,0) -- (2.7,0)node [black,midway,yshift=0.4cm] {
			$\compgap{1}$
		};
		\draw [xshift=0pt,yshift=-0.7cm,red]
		(2.7,0) -- (4.0,0)node [black,midway,yshift=-0.2cm] {
			(2)
		};
		\draw [red](2.7,-0.8) -- (2.7,-0.6);
		\draw [red](4.0,-0.8) -- (4.0,-0.6);
		\draw [xshift=0pt,yshift=-0.7cm,red]
		(1.0,0) -- (2.7,0)node [black,midway,yshift=-0.2cm] {
			(1)
		};
		\draw [red](1.0,-0.8) -- (1.0,-0.6);
		\draw [xshift=0pt,yshift=-0.7cm,red]
		(4.0,0) -- (5.1,0)node [black,midway,yshift=-0.2cm] {
			(5)
		};
		\draw [red](5.1,-0.8) -- (5.1,-0.6);
		\draw [xshift=0pt,yshift=-1.2cm,red]
		(2,0) -- (4.0,0)node [black,midway,yshift=-0.2cm] {
			(4)
		};
		\draw [red](2,-1.3) -- (2,-1.1);
		\draw [red](4.0,-1.3) -- (4.0,-1.1);
		\draw [xshift=0pt,yshift=-1.2cm,red]
		(1.0,0) -- (2,0)node [black,midway,yshift=-0.2cm] {
			(3)
		};
		\draw [red](1.0,-1.3) -- (1.0,-1.1);

		\draw [decorate,decoration={brace},xshift=0pt,yshift=0.7cm]
		(2.7,0) -- (4.0,0)node [black,midway,yshift=0.4cm] {
			$\diffcomp_1$
		};
		
		\draw [decorate,decoration={brace},xshift=0pt,yshift=1.3cm]
		(4.0,0) -- (5.1,0)node [black,midway,yshift=0.4cm] {
			$\compgap{5},\diffcomp_5$
		};
		
		\end{tikzpicture}
	\end{subfigure}
	\caption{Graphical illustration of the quantities $\labelgap{\samiter}$ (left) and $\compgap{\samiter},\diffcomp_{\samiter}$ (right) for $\narm=5$ arms, with $\posset=\{4,5\}$. We have $\upthres=4$ and $\lowthres=3$. $\diffcomp_{1}$ is equal to the max of $\min\{(1),(2)\}$ and $\min\{(3),(4)\}$; $\diffcomp_{5}$ is equal to $\min\{(2),(5)\}$. \label{fig:graph_gaps} \msp}
	\lsp
\end{figure*}
We define two problem complexities w.r.t pulls and duels separately. 

\textbf{Pull complexity.} Following previous works on TBP and pure-exploration bandits \citep{chen2014combinatorial,locatelli2016optimal}, we introduce the following quantity to characterize the intrinsic problem complexity with direct pulls. In particular, recall that  $\labelgap{\samiter}=|\sammean{\samiter}-\thres|$ is the gap between arm $\samiter$ and threshold. Then the pull complexity is defined as 
\(
\dlcpl=\sum_{\samiter=1}^{\narm}\frac{1}{(\labelgap{\samiter})^2}. 
\)
Chen et al. \citep{chen2014combinatorial} shows that there exists an algorithm using at most $O(\dlcpl\log(\narm\dlcpl/\tol))$ pulls. Moreover, they show a lower bound that any pull-only algorithm would require at least $\Omega(\dlcpl\log(1/\tol))$ pulls to give correct output with probability $1-\tol$. We add another notation for a ``partial'' label complexity: let $\dlcpl(m)$ be the sum of the largest $m$ terms in $\dlcpl$. Namely, we sort $\mu_1,\ldots, \mu_K$ by their gap with threshold, i.e.,  $\labelgap{\bs_1}\leq \labelgap{\bs_2}\leq \cdots \leq \labelgap{\bs_\narm}$ (cf. Figure \ref{fig:graph_gaps} left), and $\dlcpl(m)=\sum_{j=1}^{m}  \frac{1}{(\labelgap{\bs_{j}})^2}$.

\textbf{Duel complexity.}  Now we define the complexity w.r.t. duels. Our goal is to use duels to infer the (positive or negative) label of arms without actually pulling them. Therefore the difficulty of inferring a positive arm $\samiter\in \posset$ will depend on its difference with the ``worst'' positive arm, and similarly $\samiter\in \negset$ with the ``best'' negative arm. Let $\lowthres=\argmax_{\samiter\in \negset} \pbbeat{\samiter}$ be the best negative arm and $\upthres=\argmin_{\samiter\in \posset} \pbbeat{\samiter}$ be the worst positive arm, where $p_i$ is defined in \eqref{def:p}. And for any arm $\samiter\in \posset$, let $\compgap{\samiter}=\pbbeat{\samiter}-\pbbeat{\upthres}$ be the gap with arm $\upthres$ and similarly for any arm $j\in \negset$  define $\compgap{j}=\pbbeat{\lowthres}-\pbbeat{j}$. Intuitively, the complexity of identifying arm $\samiter$ through duels should depend on $\compgap{\samiter}$, and we therefore define $\compcpllow=\sum_{\samiter=1}^\narm \frac{1}{(\compgap{\samiter})^2}$.

Moreover,  it is worthwhile noting that the complexity of inferring a positive arm $\samiter$ using arm $\upthres$ will not only depend on $\pbbeat{\samiter}-\pbbeat{\upthres}$, but also on $\pbbeat{\upthres}-\pbbeat{\lowthres}$. If the gap $\pbbeat{\upthres}-\pbbeat{\lowthres}$ is very small, we cannot easily differentiate $\upthres$ from the other negative arms. On the other hand, we can use any positive arm $\samiterp$ to infer about arm $\samiter$, when $\pbbeat{\upthres}\leq \pbbeat{\samiterp}<\pbbeat{\samiter}$. To this end, we define
\[\diffcomp_\samiter=\begin{cases}
\displaystyle \max_{j\in \posset} \min\{\pbbeat{j}-\pbbeat{\lowthres}, \pbbeat{\samiter}-\pbbeat{j} \} & \text{ if } \samiter \in \posset,\\
\displaystyle \max_{j\in \negset} \min\{\pbbeat{j}-\pbbeat{\samiter}, \pbbeat{\upthres}-\pbbeat{j} \}, & \text{ if } \samiter \in \negset,
\end{cases} \]

See Figure \ref{fig:graph_gaps} right for a reference. And we define another duel complexity as 
\(\compcpl=\sum_{\samiter\in \armset \setminus \{\upthres,\lowthres \}} \frac{1}{\displaystyle\left( \diffcomp_\samiter\right)^2}. \)

\textbf{Relation between $\compgap{\samiter}$ and $\diffcomp_{\samiter}$.} Although we always have $\compgap{\samiter}\geq \diffcomp_{\samiter}$ and thus $\compcpllow\leq \compcpl$, in many situations $\compgap{\samiter}$ and $\diffcomp_{\samiter}$ are of similar scales. To see this, notice that $\diffcomp_{\samiter}\geq \min\{\compgap{\samiter}, \pbbeat{\upthres}-\pbbeat{\lowthres} \}$. In many cases in practice, we would expect a gap between $\posset$ and $\negset$, and therefore $\pbbeat{\upthres}-\pbbeat{\lowthres}$ will be a constant. We give a formal proposition about the relation between $\compcpl$ and $\compcpllow$ under Massart noise condition in Section \ref{sec:example_upper}.
\newcommand{\linkfunc}{\sigma}
\newcommand{\massartnoise}{c}

In Section \ref{sec:algo}, we present an upper bound using $\compcpl$, and in Section \ref{sec:lower}, we present a lower bound using $\compcpllow$. 
\msp
\section{The Rank-Search (\algoname) Algorithm \label{sec:algo}}
\ssp

We present our Rank-Search algorithm in this section. We give a detailed description of the algorithm in Section \ref{sec:algo_descr}, and analyze its theoretical performance in Section \ref{sec:upper_thm}.
\ssp
\subsection{Algorithm Description\label{sec:algo_descr}}
\ssp
\newcommand{\labeledsetBS}{T}
\newcommand{\shrinkfactor}{\kappa}

\begin{algorithm}[htb!]
	\caption{Rank-Search (\algoname)}
	\label{algo:ranksearch}
	\begin{algorithmic}[1]        
		\Require{Set of arms $\armset$, noise tolerance $\tol$, threshold $\thres$, initial confidence level $\comperr_0$, shrinking factor $\shrinkfactor$}
		\State $\workset\leftarrow \armset=[\narm]$, counter $\itercount\leftarrow 0$
		\State For every $\bs\in \workset$, let $\ncomp{\bs}\leftarrow 0, \nwin{\bs}\leftarrow 0$ \\
		\Comment{$\ncomp{\bs}$: Comparison count, $\nwin{\bs}$: Win count}
		\While{$\workset \ne \emptyset$}
		\While{$\exists \bs\in \workset, \ncomp{\bs}\leq \frac{1}{\comperr_\itercount^2} \log\left(\frac{8|S|(\itercount+1)^2}{\tol}\right) $} \label{step:rank_start}
		\For{$\bs\in \workset$}
		\State Draw $\bs'\in [\narm]$ uniformly at random, and compare arm $\bs$ with arm ${\bs'}$
		\State If arm $\bs$ wins, $\nwin{\bs}\leftarrow \nwin{\bs}+1$
		\State $\ncomp{\bs}\leftarrow \ncomp{\bs}+1$
		\EndFor
		\EndWhile
		\State Compute $\estprob{\bs}\leftarrow \nwin{\bs}/\ncomp{\bs}$ for all $\bs\in \workset$ 
		\State Rank arms in $\workset$ according to their $\estprob{i}$: $\workset=(\bs_1,\bs_2,...,\bs_{|\workset|}), \estprob{\bs_1}\leq \estprob{\bs_2}\leq\cdots\leq \estprob{\bs_{|\workset|}}$ \label{step:rank}
		\State Get $(\bsvar, \labeledsetBS) = \bsalgoname(S, \thres, \tol/4(t+1)^2)$ \label{step:binary_search} 
		\State If $\bsvar<|S|$, let $\overline{\workset}=\{\bs\in \workset: \estprob{\bs}-\estprob{{\bs_{\bsvar+1}}}> 2\comperr_{\itercount} \}$; for $\bs\in \overline{\workset}$, set $\hat{\dlabel}_\bs=1$ \label{step:infer_pos}
		\State If $\bsvar>0$, let $\underline{\workset}=\{\bs\in \workset: \estprob{\bs}-\estprob{\bs_{\bsvar}}< -2\comperr_{\itercount} \}$; for $\bs\in \underline{\workset}$, set $\hat{\dlabel}_\bs=0$ \label{step:infer_neg}        
		\State $\workset\leftarrow \workset-\overline{\workset}-\underline{\workset} - \labeledsetBS$
		\State $\comperr_{\itercount+1} \leftarrow \comperr_{\itercount}/\shrinkfactor$
		\State $\itercount\leftarrow \itercount+1$
		\EndWhile        
		\Ensure{$\estposset=\{\samiter:\hat{\dlabel}_{\samiter}=1\}, \estnegset = \armset \setminus \estposset$}
	\end{algorithmic}
\end{algorithm}
Algorithm \ref{algo:ranksearch} describes the Rank-Search algorithm.
At a high level, \algoname alternates between ranking items using duels (Line \ref{step:rank_start}-\ref{step:rank}), and a binary search using pulls (Line \ref{step:binary_search} and Algorithm \ref{algo:binary_search}). We first initialize the work set $\workset$ with all arms, and comparison confidence $\comperr_0=1/4$. In the rank phase, we iteratively compare each arm $\samiter\in \workset$ with a random arm, as an unbiased estimator for $\pbbeat{\samiter}$. After each arm has received $\frac{\log(2/\tol_t)}{\comperr_{\itercount}^2}$ comparisons, we rank the arms in $\workset$ according to their win rates $\estprob{\samiter}$. Then Algorithm \ref{algo:binary_search} performs binary search on the sorted $\workset$ to find the boundary between positive and negative arms (detailed below). 

Our binary search is a standard process: it starts with the middle of the sequence, and if the middle arm is positive, we move to the first half (i.e., arms with smaller estimated means), and otherwise, we move to the second half (i.e., arms with larger estimated means). Algorithm 2 just behaves as if $S$ is perfectly ranked. It is worthwhile noting that since $S$ is not ranked according to the real $\pbbeat{\samiter}$'s, there might be negative samples larger than positive samples in $\workset$. However, we show that \algoname can still run effectively even with a misranked $\workset$.
We figure out the label of the middle point using \figurelabelname (Algorithm \ref{algo:figure_out_label}). \figurelabelname aims to solve the simple TBP in the one-arm setting: We keep a confidence interval $\estmean{\samiter}\pm \gamma$ in each round and return the label once $\thres$ is not in the interval. 

\bsalgoname returns the boundary $\bsvar$. Let $\overline{\workset}=\{\bs\in \workset: \estprob{\bs}-\estprob{{\bs_{\bsvar+1}}}> 2\comperr_{\itercount} \}$ be the arms that are separated from arm $\bs_{\bsvar+1}$, and we label $\samiter\in \overline{\workset}$ as positive; we do similarly for negative arms. Then we update working set $\workset$ with all the unlabeled arms, and we shrink the confidence level by a constant factor $\shrinkfactor>1$.

\begin{algorithm}[htb!]
	\caption{\bsalgoname}
	\label{algo:binary_search}
	\begin{algorithmic}[1]        
		\Require{Sequence $\workset=(\bs_1,\bs_2,...,\bs_{|\workset|})$, threshold $\thres$, confidence $\tol_0$}
		\State $\bsvar_{\min}\leftarrow 0, \bsvar_{\max} \leftarrow |\workset|, \labeledsetBS = \emptyset$
		\While{$\bsvar_{\min}<\bsvar_{\max}$}
		\State $\bsvar=\lceil(\bsvar_{\min}+\bsvar_{\max})/2\rceil$
		\State $\hat{\dlabel}_{\bs_\bsvar}=\figurelabelname(\bs_\bsvar,\thres, \frac{\tol}{\log |S|})$ 
		\State $\labeledsetBS = \labeledsetBS \cup \{\bs_{\bsvar} \}$
		\If{$\hat{\dlabel}_{\bs_\bsvar}=1$}
		\State $\bsvar_{\max}=\bsvar-1$ \label{step:moveleft}
		\Else
		\State $\bsvar_{\min}=\bsvar$\label{step:moveright}
		\EndIf
		\EndWhile
		\Ensure{Boundary $\bsvar_{\min}$, labeled arms $\labeledsetBS$}
	\end{algorithmic}
\end{algorithm}

\begin{algorithm}[htb!]
	\caption{\figurelabelname}
	\label{algo:figure_out_label}
	\begin{algorithmic}[1]        
		\Require{Arm $\bs$, threshold $\thres$, confidence $\tol_1$}
		\State $t\leftarrow 0$ 
		\State Define $\nlabel{\bs}\leftarrow 0, \bandsum{\bs}\leftarrow 0$
		\Repeat
		\While{$\nlabel{\bs}\leq 2^t$}
		\State Query $\dlabelrv_\bs$, and let $\bandsum{\bs}\leftarrow \bandsum{\bs}+\dlabelrv_\bs, \nlabel{\bs}\leftarrow
		\nlabel{\bs}+1$
		\EndWhile
		\State Compute $\estmean{\bs}\leftarrow \bandsum{\bs}/\nlabel{\bs}$
		\State $\gamma = \sgpara\sqrt{\frac{2\log(4(t+1)^2/\tol_1)}{\nlabel{\bs}}}$  \label{step:delta0} 
		\State $t\leftarrow t+1$
		\Until{$|\estmean{\bs}-\thres|> \gamma$}
		\Ensure{Predicted label $\hat{y}_{\bs}=I(\estmean{\bs}>\thres)$}
	\end{algorithmic}
\end{algorithm}
\begin{figure*}[th!]
	\centering
	\begin{subfigure}[b]{0.45\textwidth}\label{fig:ex1}
		\centering
		\begin{tikzpicture}\label{ex3}
		\draw[-] (-3,0) -- (3,0) ;
		\draw[shift={(-3,0)},color=black](0pt,3pt) -- (0pt,0pt);
		\draw[shift={(-3,0)},color=black] (0pt,3pt) -- (0pt,0pt) node[below] 
		{0};
		\draw[shift={(3,0)},color=black] (0pt,3pt) -- (0pt,0pt) node[below] 
		{1};
		\draw[shift={(0,0)},color=black] (0pt,3pt) -- (0pt,0pt) node[below] 
		{$\frac{1}{2}$};
		\draw[shift={(-1.0,0)},color=red] (0pt,3pt) -- (0pt,-3pt) node[below] 
		{$\frac{1}{3}$};
		\draw[shift={(-1.5,0)},color=red] (0pt,3pt) -- (0pt,-3pt) node[below] 
		{$\frac{1}{4}$};
		\draw[shift={(-1.8,0)},color=red] (0pt,3pt) -- (0pt,-3pt) node[below] 
		{$\frac{1}{5}$};
		\draw[shift={(-2.2,0)},color=red] (0pt,0pt) -- (0pt,0pt) node[below] 
		{...};
		\draw[shift={(-2.5,0)},color=red] (0pt,3pt) -- (0pt,-3pt) node[below] 
		{$\frac{1}{l+2}$};
		\draw[shift={(1.0,0)},color=red] (0pt,3pt) -- (0pt,-3pt) node[below] 
		{$\frac{2}{3}$};
		\draw[shift={(1.5,0)},color=red] (0pt,3pt) -- (0pt,-3pt) node[below] 
		{$\frac{3}{4}$};
		\draw[shift={(1.8,0)},color=red] (0pt,3pt) -- (0pt,-3pt) node[below] 
		{$\frac{4}{5}$};
		\draw[shift={(2.2,0)},color=red] (0pt,0pt) -- (0pt,0pt) node[below] 
		{...};
		\draw[shift={(2.5,0)},color=red] (0pt,3pt) -- (0pt,-3pt) node[below] 
		{$\frac{l+1}{l+2}$};
		\end{tikzpicture}
	\end{subfigure}
	\hfill
	\begin{subfigure}[b]{0.45\textwidth}\label{fig:ex2}
		\centering
		\includegraphics[width=\textwidth]{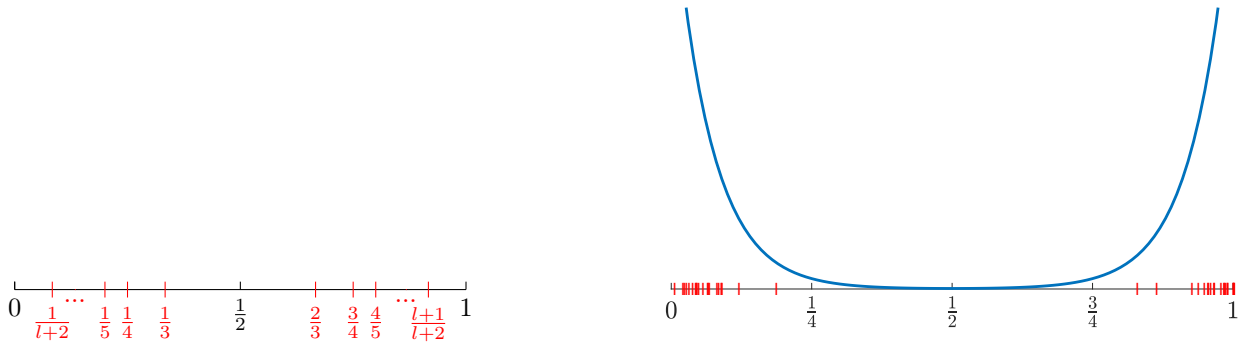}
	\end{subfigure}
	
	\caption[Graphical description]{\label{fig:examples}Graphical illustration of the examples. Each red vertical line corresponds to one arm $\samiter$, and $\thres=1/2$. Left: Example 1 with fixed means. Right: Example 2 with $\narm=40$ arms. The blue curve illustrates the pdf of all arm means.}
	
\end{figure*}

\ssp
\subsection{Theoretical Analysis \label{sec:upper_thm}}
\ssp

\newcommand{\ndirectsample}{n_l}
\newcommand{\lowestgaplabel}{\Delta^*}
We now present the theorem about performance of \algoname.
\begin{theorem}
	\label{thm:upper}
	Let $\comperr^*=\min_{\samiter \in \armset\setminus \{\upthres,\lowthres\}} \diffcomp_{\samiter}$ and $\lowestgaplabel=\min_{\samiter} \labelgap{\samiter}$. 
	Then with probability $1-\tol$ \algoname succeeds, and the number of duels and pulls it uses are bounded by 
	\begin{align*}
	n_{\text{duel}}&\leq 32\compcpl\log \frac{4\narm\log(1/\comperr^*) }{\tol},\\
	n_{\text{pull}}&\leq 16R^2\dlcpl(\ndirectsample)\log\left(\frac{ \ndirectsample\log (1/\lowestgaplabel)}{\tol}\right),
	\end{align*}
	where $\ndirectsample$ is the number of times \figurelabelname is called, and we have $\ndirectsample= O(\log \narm\log (1/\comperr^*))$.
\end{theorem}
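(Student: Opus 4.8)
Everything will follow from a single high-probability ``clean'' event, on which the algorithm is deterministically correct, the number of outer iterations is controlled, and every duel and pull can be charged to one arm.

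\emph{(1) The clean event.} Let $\event$ be the intersection, over all outer iterations $\itercount$, of: (a) $|\estprob{i}-\pbbeat{i}|\le\comperr_\itercount$ for every $i$ still in $\workset$ at iteration $\itercount$; and (b) every call to \figurelabelname made during iteration $\itercount$ returns the correct label and halts. For (a): at iteration $\itercount$ each surviving arm has accumulated at least $\comperr_\itercount^{-2}\log(8|\workset|(\itercount+1)^2/\tol)$ comparisons against uniformly random opponents, i.e.\ i.i.d.\ Bernoulli draws with mean $\pbbeat{i}$, so Hoeffding's inequality and a union bound over arms and iterations leave failure probability at most $\tol/4$. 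For (b): by the sub-Gaussian assumption with parameter $\sgpara$, the interval $[\estmean{i}-\gamma,\estmean{i}+\gamma]$ inside \figurelabelname contains $\sammean{i}$ at every doubling round except with probability summing to $O(\tol_1)$; since a binary search at iteration $\itercount$ runs with parameter $\tol/(4(\itercount+1)^2)$ and makes at most $\lceil\log_2|\workset|\rceil$ calls with $\tol_1=\tol/(4(\itercount+1)^2\log|\workset|)$, a union bound over calls and iterations costs at most $\tol/2$. Hence $\Pr[\event]\ge1-\tol$.

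\emph{(2) Correctness on $\event$.} \figurelabelname exits only when $\thres$ has left the valid confidence interval around $\sammean{i}$, so its output has the sign of $\sammean{i}-\thres$; in particular $\bs_{\bsvar}$ (returned negative) and $\bs_{\bsvar+1}$ (returned positive) by \bsalgoname are classified correctly. In Line~\ref{step:infer_pos}, if $\estprob{i}-\estprob{\bs_{\bsvar+1}}>2\comperr_\itercount$ then by (a) $\pbbeat{i}>\pbbeat{\bs_{\bsvar+1}}\ge\pbbeat{\upthres}>\pbbeat{\lowthres}$ (using $\bs_{\bsvar+1}\in\posset$ and the standing hypothesis that every positive arm outscores every negative one), so $i\in\posset$ and the assigned label is correct; Line~\ref{step:infer_neg} is symmetric. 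Since the outer loop halts only at $\workset=\emptyset$ and an arm leaves $\workset$ exactly when labeled, every arm is correctly labeled, so $\estposset=\posset$.

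\emph{(3) The iteration count --- the main obstacle.} Termination is immediate: \bsalgoname always probes, hence removes, at least one arm, so $|\workset|$ strictly decreases. The crux is a progress lemma: on $\event$, at an iteration with $\comperr_\itercount<\diffcomp_i/c$ for a universal constant $c\ge 4$, a surviving $i\in\posset$ is removed --- either it is the first-ranked positive arm in $\workset$, hence probed by \bsalgoname, or it is strictly above that arm in Borda score and lands in $\overline\workset$. The engine is the anchor arm $j^{*}\in\posset$ realizing $\diffcomp_i$ in its definition: since $\pbbeat{j^{*}}$ exceeds every negative score by more than $\diffcomp_i>2\comperr_\itercount$, by (a) the rank of $j^{*}$ exceeds that of the (negative) arm $\bs_\bsvar$, so $j^{*}$ is on the positive side of the cut and $\estprob{\bs_{\bsvar+1}}\le\estprob{j^{*}}$, whence $\estprob{i}-\estprob{\bs_{\bsvar+1}}\ge(\pbbeat{i}-\pbbeat{j^{*}})-2\comperr_\itercount\ge\diffcomp_i-2\comperr_\itercount>2\comperr_\itercount$; conversely, if $i$ is not removed at iteration $\itercount$ one shows $\diffcomp_i\le c\comperr_\itercount$, so $i$ survives only $O(\log_\shrinkfactor(1/\diffcomp_i))$ iterations. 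I expect the care-intensive step to be propagating this once $j^{*}$ (or $\upthres$ itself) has already been removed, where one must verify that the then-first-ranked positive arm still works as an anchor --- this is exactly what the definition of $\diffcomp_i$, balancing ``clearly above all negatives'' against ``clearly below $i$'', is engineered to guarantee. Granting the lemma, every arm but $\upthres,\lowthres$ is gone once $\comperr_\itercount<c^{-1}\comperr^{*}$, and $\upthres,\lowthres$ are cleared by $O(1)$ more binary searches, so the number of iterations is $O(\log_\shrinkfactor(1/\comperr^{*}))$; as each binary search issues at most $\lceil\log_2\narm\rceil$ probes, each to a distinct arm, the number of \figurelabelname calls is $\ndirectsample=O(\log\narm\,\log(1/\comperr^{*}))$.

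\emph{(4) Counting duels and pulls.} All surviving arms carry a common comparison count (they are incremented in lockstep and $\workset$ only shrinks), so if $i$ leaves at iteration $t_i$ its final count just exceeds $\comperr_{t_i}^{-2}\log(8|\workset|(t_i+1)^2/\tol)$; since $i$ survived iteration $t_i-1$, the progress lemma gives $\comperr_{t_i}\ge\diffcomp_i/(c\shrinkfactor)$, so $i$ incurs at most $(c\shrinkfactor)^2\diffcomp_i^{-2}\log(8\narm\log^2(1/\comperr^{*})/\tol)+1$ comparisons. Tuning $c,\shrinkfactor$ so $(c\shrinkfactor)^2\le32$ and summing over $i\in\armset\setminus\{\upthres,\lowthres\}$ (the $O(\log\narm)$ comparisons charged to $\upthres,\lowthres$ are absorbed) yields $\numduel\le32\compcpl\log\frac{4\narm\log(1/\comperr^{*})}{\tol}$. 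For pulls, each of the $\le\ndirectsample$ \figurelabelname calls is on a distinct arm (probed arms leave $\workset$) and, on $\event$, stops at the first doubling round where $\gamma<\labelgap{i}/2$, using $O(\sgpara^2\labelgap{i}^{-2}\log(\ndirectsample\log(1/\lowestgaplabel)/\tol))$ pulls after substituting $\tol_1=\tol/(4(\itercount+1)^2\log|\workset|)$ and $\itercount=O(\log(1/\comperr^{*}))$; summing over the calls and bounding the sum of at most $\ndirectsample$ inverse-squared gaps by the $\ndirectsample$ largest terms of $\dlcpl$, namely $\dlcpl(\ndirectsample)$, gives $\numpull\le16\sgpara^2\dlcpl(\ndirectsample)\log\frac{\ndirectsample\log(1/\lowestgaplabel)}{\tol}$, completing the proof.
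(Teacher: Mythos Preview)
Your proposal takes essentially the same approach as the paper: the clean-event lemma, correctness via the binary-search boundary property (the paper's Lemma on \bsalgoname), the progress lemma driven by the anchor $j^{*}$ realizing $\diffcomp_i$, and the per-arm charging of duels and pulls are all identical---your direct derivation that $\estprob{\bs_{\bsvar+1}}\le\estprob{j^{*}}$ and hence $\estprob{i}-\estprob{\bs_{\bsvar+1}}>2\comperr_t$ is just the contrapositive of the paper's contradiction argument. The subtlety you explicitly flag (whether the argument still goes through once $j^{*}$ has already left $\workset$) is in fact glossed over in the paper's own proof as well, which tacitly assumes $j^{*}\in\workset$ at the relevant iteration; so your worry is legitimate but does not separate your plan from the paper's.
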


\textbf{Remark.} First, the results in \citep{chen2014combinatorial} correspond to using $O\left(\dlcpl(\narm)\log\left(\frac{\dlcpl(\narm)\narm}{\tol}\right)\right)$ pulls to get $\tol$ confidence. In terms of number of direct pulls, \algoname can reduce $\narm$ dependence to $\log\narm$ dependence when $\comperr^*$ is a constant, an exponential improvement.\\
Second, in terms of number of duels, \algoname has a requirement based on dueling complexity $\compcpl$ instead of $\dlcpl$. In many cases, $\compcpl$ is close to $\dlcpl$, and we point out several such cases in Section \ref{sec:example_upper}. 
Thus, we see that in the Dueling-choice framework, the number of pulls required improves exponentially in dependence on $K$ at the expense of requiring a number of duels proportional to number of pulls in pull-only case. 


\msp
\section{Implications of Upper Bounds in Special Cases \label{sec:example_upper}}
\ssp


We provide two examples to compare our theoretical upper bounds with the classical pull-only TBP. Throughout this section, we will assume that all the observations follow Bernoulli distributions, and $\thres=1/2$. The examples we raise in this section all follow the Massart noise condition, i.e., $|\sammean{\samiter}-\thres|\geq c$ that is well known in classification analysis \citep{2007math......2683M}. We first give the following proposition to show that \algoname is optimal under Massart noise.

\ssp
\begin{prop}\label{prop:massart}
	Suppose $\labelgap{\samiter}\geq \massartnoise$ for some $\massartnoise$ for all arm $\samiter$, and $\prefmat_{\samiter\samiterp}=\frac{1}{2}+\linkfunc(\sammean{\samiter}-\sammean{\samiterp})$ for some increasing link function $\linkfunc: \mathbb{R}\rightarrow [-1/2, 1/2]$. Also assume for any $x,y \in [\sammean{1},\sammean{\narm}]$ we have $\linkfunc(x-y)\geq L(x-y)$ for some constant $L$. Then we have i) $\pbbeat{\upthres}-\pbbeat{\lowthres}\geq 2L\massartnoise$, ii) $\diffcomp_{\samiter}\geq \min\{2L\massartnoise,\compgap{\samiter} \}$, and iii) $\compcpl\leq \frac{1}{4L^2\massartnoise^2}\compcpllow$. 
\end{prop}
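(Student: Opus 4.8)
The plan is to prove the three items in order; items (ii) and (iii) are short deductions from (i), so (i) carries essentially all of the work. \textbf{Setup.} Since $\upthres\in\posset$ and $\lowthres\in\negset$ we have $\sammean{\upthres}\ge\thres>\sammean{\lowthres}$, and the Massart condition $|\sammean{\samiter}-\thres|\ge\massartnoise$ then forces $\sammean{\upthres}-\thres\ge\massartnoise$ and $\thres-\sammean{\lowthres}\ge\massartnoise$, hence $\sammean{\upthres}-\sammean{\lowthres}\ge 2\massartnoise$. This separation is the only place the Massart hypothesis enters the proof of (i).

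\textbf{Item (i).} Substituting $\prefmat_{\samiter\samiterp}=\tfrac12+\linkfunc(\sammean{\samiter}-\sammean{\samiterp})$ into the definition of the Borda score and cancelling the constant parts, one obtains the identity
\[
\pbbeat{\upthres}-\pbbeat{\lowthres}=\frac{1}{\narm-1}\sum_{j=1}^{\narm}\big(\linkfunc(\sammean{\upthres}-\sammean{j})-\linkfunc(\sammean{\lowthres}-\sammean{j})\big).
\]
In each summand the two arguments of $\linkfunc$ differ by exactly $\sammean{\upthres}-\sammean{\lowthres}>0$, and the link-function hypothesis, used in the increment form $\linkfunc(a)-\linkfunc(b)\ge L(a-b)$ for $a\ge b$ in the relevant range, gives $\linkfunc(\sammean{\upthres}-\sammean{j})-\linkfunc(\sammean{\lowthres}-\sammean{j})\ge L(\sammean{\upthres}-\sammean{\lowthres})$ for every $j$. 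Summing the $\narm$ terms and dividing by $\narm-1$ yields $\pbbeat{\upthres}-\pbbeat{\lowthres}\ge\frac{\narm}{\narm-1}\,L(\sammean{\upthres}-\sammean{\lowthres})\ge 2L\massartnoise$, which is (i). I expect the delicate point to be exactly this step: extracting the per-coordinate increment bound from the link hypothesis, and observing that it is the pooled contribution of all $\narm$ arms — not merely the two terms that directly compare $\upthres$ with $\lowthres$ — that compensates the $1/(\narm-1)$ normalization in the Borda score.

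\textbf{Items (ii) and (iii).} Item (ii) is immediate from (i): for $\samiter\in\posset$, taking $j=\upthres$ inside the maximum defining $\diffcomp_{\samiter}$ gives $\diffcomp_{\samiter}\ge\min\{\pbbeat{\upthres}-\pbbeat{\lowthres},\,\pbbeat{\samiter}-\pbbeat{\upthres}\}=\min\{\pbbeat{\upthres}-\pbbeat{\lowthres},\,\compgap{\samiter}\}$, and symmetrically taking $j=\lowthres$ for $\samiter\in\negset$; now insert $\pbbeat{\upthres}-\pbbeat{\lowthres}\ge 2L\massartnoise$ from (i). For (iii), fix $\samiter\in\armset\setminus\{\upthres,\lowthres\}$ and compare $\frac{1}{(\diffcomp_{\samiter})^2}$ with $\frac{1}{4L^2\massartnoise^2}\cdot\frac{1}{(\compgap{\samiter})^2}$. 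If $\compgap{\samiter}\le 2L\massartnoise$, then (ii) gives $\diffcomp_{\samiter}\ge\compgap{\samiter}$, while $\diffcomp_{\samiter}\le\compgap{\samiter}$ always, so $\diffcomp_{\samiter}=\compgap{\samiter}$ and the desired bound reduces to $4L^2\massartnoise^2\le 1$, which holds since $2L\massartnoise\le\linkfunc(\sammean{\upthres}-\sammean{\lowthres})\le\tfrac12$. If instead $\compgap{\samiter}>2L\massartnoise$, then (ii) gives $\diffcomp_{\samiter}\ge 2L\massartnoise$, so $\frac{1}{(\diffcomp_{\samiter})^2}\le\frac{1}{4L^2\massartnoise^2}\le\frac{1}{4L^2\massartnoise^2}\cdot\frac{1}{(\compgap{\samiter})^2}$, using that Borda scores lie in $[0,1]$ so $\compgap{\samiter}\le 1$. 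Summing this per-arm inequality over $\samiter\in\armset\setminus\{\upthres,\lowthres\}$ gives $\compcpl\le\frac{1}{4L^2\massartnoise^2}\compcpllow$.
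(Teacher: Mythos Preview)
Your proof is correct and follows essentially the same approach as the paper's: part (i) via the Borda-score difference written as a sum of link-function increments, part (ii) by specializing the maximum to $j=\upthres$ (or $\lowthres$), and part (iii) by combining (ii) with $2L\massartnoise\le 1$ and $\compgap{\samiter}\le 1$. The only cosmetic difference is that for (iii) the paper uses the product bound $\diffcomp_{\samiter}\ge\min\{2L\massartnoise,\compgap{\samiter}\}\ge 2L\massartnoise\cdot\compgap{\samiter}$ in one line, whereas you do the equivalent case split on whether $\compgap{\samiter}\lessgtr 2L\massartnoise$.
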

\ssp

Proposition \ref{prop:massart} shows that $\compcpl=O(\compcpllow)$ under Massart noise and the assumption that a link function exists. The assumption of such a link function is satisfied by several popular comparison models including the Bradley-Terry-Luce (BTL) \citep{bradley1952rank} and Thurstone models \citep{thurstone1927law}.
We now give two positive examples that \algoname will lead to a gain compared with the pull-only setting.
For simplicity we will suppose duels follow a comparison model given as follows:  $M_{ij} = \Pr[{\samiter}\succ {\samiterp}]=\frac{1+\sammean{\samiter}-\sammean{\samiterp}}{2}$. This is known as the linear link function since it linearly relates the duel win probability with the reward means. Routine calculations show that under a linear link function we have $\pbbeat{\samiter}-\pbbeat{\samiterp}=\Theta(\sammean{\samiter}-\sammean{\samiterp})$. We require that both our method and pull-only method succeed with probability $1-\tol$, with a small constant $\tol$ (e.g., $\tol=0.05$). Both of our positive examples assume that the means are dense near the boundaries given by $\mu=0$ and $\mu=1$, while a very small fraction of arms have means near 1/2, so that there is a significant gap between the arms $\upthres$ and $\lowthres$ closest to the threshold, as well as any arm $\samiter$ and arm $\upthres$ or $\lowthres$ that is closest to it(cf. Figure \ref{fig:examples}). Although these examples can look artificial at first sight, we note that such a bowl-shaped distribution is common in practice, and is similar to Tsybakov noise \citep{tsybakov2004optimal} assumption used to characterize classification noise in the machine learning literature.

\noindent\textbf{Example 1.} Suppose $\narm=2l$, and $\sammean{\samiter}=\frac{1}{(l+3)-\samiter}$ for $1\leq \samiter\leq l$, and $\sammean{\samiter}=1-\frac{1}{\samiter-(l-2)}$ for $l+1\leq \samiter\leq 2l$ (see Figure \ref{fig:examples} left). We will have $\labelgap{\samiter}=\diffcomp_{\samiter}=\Omega(1)$ for all arms $\samiter\in \armset$. Then the previous state-of-art CLUCB algorithm requires $O(\narm\log \narm)$ pulls, and their lower bounds show that any pull-only algorithm needs at least $\Omega(\narm)$ pulls. On the other hand, our algorithm requires $O(\narm\log \narm)$ duels and only $O(\log^2 \narm)$ pulls. When pulls are more expensive than duels, there is a significant cost saving when using our RS algorithm.

%

\noindent\textbf{Example 2.} Suppose $\narm=2l$. Sample $x_1,...,x_{\narm}$ from an exponential distribution with parameter $\lambda=4\log(4l/\tol)$, and let $\sammean{\samiter}=x_\samiter$ for $1\leq \samiter\leq l$, and $\sammean{\samiter}=1-x_{\samiter}$ for $l+1\leq \samiter\leq 2l$ (see Figure \ref{fig:examples} right). Then with probability $1-\tol$: i) $\sammean{\samiter}\in [0,1] \; \forall \samiter\in [\narm]$; ii)  $\labelgap{\samiter}=\Omega(1)$, and $\compcpl=\compcpllow$; iii) CLUCB requires $O(\narm \log \narm)$ pulls, and any pull-only algorithm requires at least $\Omega(\narm)$ pulls; iv) Our algorithm requires $O(\narm \log^3\narm)$ duels and $O(\log^2 \narm)$ pulls.

We provide proofs of the results for these two examples in the appendix.

%
%

\msp
\section{Lower Bounds \label{sec:lower}}
\ssp

In this section, we give lower bounds that complement our upper bounds. We first give an arm-wise lower bound in Section \ref{sec:lower_arm} to show that \algoname is almost optimal in terms of the total number of queries to each individual arm. Then, we discuss the optimality of both $\numduel$ and $\numpull$ in Section \ref{sec:lower_all}.

For simplicity, in this section we suppose all rewards follow a Gaussian distribution with parameter $R$, i.e., $\samdistr{\samiter}=\mathcal{N}(\sammean{\samiter},R^2)$.
Our results can be easily extended to other sub-Gaussian distributions (e.g., when all rewards are binary). 
\ssp
\subsection{An Arm-Wise Lower Bound \label{sec:lower_arm}}
\ssp

The following theorem gives a lower bound on the number of pulls and duels on a particular arm $\samiterint$.
\ssp
\begin{theorem}\label{thm:lower}
	Suppose an algorithm $\algo$ recovers $\posset$ with probability $1-\tol$ for any problem instance $(\prefmat,\meanvec)$ and $\tol\leq 0.15$. For any arm $\samiter$, let $\dueltime{\samiter}^{\algo}$ be the number of times that arm $\samiter$ is selected for a duel, and $\pulltime{\samiter}^{\algo}$ be the number of times that arm $\samiter$ is pulled. Let $c=\min\{\frac{1}{10}, \frac{\sgpara^2}{2}\}$. Then for any problem instance $(\prefmat,\meanvec)$ with $\prefmat_{\samiter\samiterp}\geq \frac{3}{8}$ for every arm $\samiter,\samiterp\in [\narm]$, and a specific arm $\samiterint\in \posset$, we have
	\begin{equation}
	\E_{\prefmat,\meanvec}[(\compgap{\samiterint})^2\dueltime{\samiterint}^{\algo}+(\labelgap{\samiterint})^2\pulltime{\samiterint}^{\algo}]\geq c\log(\frac{1}{2\tol}).\label{eqn:lower_armwise}
	\end{equation}
\end{theorem}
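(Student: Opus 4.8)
The plan is to prove Theorem~\ref{thm:lower} by a change-of-measure (two-point / Le~Cam) argument adapted to the mixed pull--duel model. Fix the instance $(\prefmat,\meanvec)$ with $\prefmat_{\samiter\samiterp}\ge \tfrac38$ and the target arm $\samiterint\in\posset$; we may assume $\labelgap{\samiterint}>0$ (otherwise no algorithm can meet the hypothesis). First I would construct a single alternative instance $(\prefmat',\meanvec')$ that agrees with $(\prefmat,\meanvec)$ everywhere except in the data attached to arm $\samiterint$, and in which arm $\samiterint$ flips from positive to negative. Concretely: set $\sammean{\samiterint}'=2\thres-\sammean{\samiterint}$ (a reflection across the threshold, so $\labelgap{\samiterint}$ is unchanged but $\sammean{\samiterint}'<\thres$); and, with an arbitrarily small $\eta>0$, for every $\samiterp\neq\samiterint$ set $\prefmat'_{\samiterint\samiterp}=\prefmat_{\samiterint\samiterp}-(\compgap{\samiterint}+\eta)$ and $\prefmat'_{\samiterp\samiterint}=1-\prefmat'_{\samiterint\samiterp}$ (using the standard relation $\prefmat_{\samiter\samiterp}=1-\prefmat_{\samiterp\samiter}$). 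Lowering row $\samiterint$ uniformly by $\compgap{\samiterint}+\eta$ makes $\pbbeat{\samiterint}'=\pbbeat{\upthres}-\eta$ while raising every other Borda score by the tiny amount $(\compgap{\samiterint}+\eta)/(\narm-1)$, so that in $(\prefmat',\meanvec')$ arm $\samiterint$ sits strictly below every surviving positive arm.

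The first real task is to verify $(\prefmat',\meanvec')$ is a legitimate \problemname instance. A short check shows that in $(\prefmat',\meanvec')$ the positive set is $\posset\setminus\{\samiterint\}$, the negative set is $\negset\cup\{\samiterint\}$, and the monotonicity requirement $\pbbeat{i}'>\pbbeat{j}'$ for $i$ positive and $j$ negative is still met (the ordering among the unchanged arms is preserved, and $\pbbeat{\samiterint}'$ is now strictly the smallest among the arms that were positive). Keeping the entries valid \emph{and} bounded away from $0$ and $1$ is exactly where the hypothesis $\prefmat_{\samiter\samiterp}\ge\tfrac38$ is used: together with $\prefmat_{\samiter\samiterp}=1-\prefmat_{\samiterp\samiter}$ it forces every entry of $\prefmat$ into $[\tfrac38,\tfrac58]$, hence every Borda score into $[\tfrac38,\tfrac58]$, so $\compgap{\samiterint}\le\tfrac14$ and (letting $\eta\to0$) $\prefmat'_{\samiterint\samiterp}\in[\tfrac18,\tfrac58]$ and $\prefmat'_{\samiterp\samiterint}\in[\tfrac38,\tfrac78]$. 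I expect this bookkeeping --- keeping the perturbed instance simultaneously feasible and well-conditioned for the KL estimates below --- to be the main technical obstacle; the rest is routine.

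Next I would invoke the standard transportation/chain-rule identity for bandit models. Let $\mathbb{P}_1,\mathbb{P}_2$ be the laws of the whole interaction of $\algo$ with the two instances; since they differ only when arm $\samiterint$ is pulled or dueled,
\[ \mathrm{KL}(\mathbb{P}_1\,\|\,\mathbb{P}_2)=\E_{1}[\pulltime{\samiterint}^{\algo}]\cdot\mathrm{KL}\bigl(\N(\sammean{\samiterint},\sgpara^2)\,\|\,\N(\sammean{\samiterint}',\sgpara^2)\bigr)+\sum_{\samiterp\neq\samiterint}\E_{1}[N_{\samiterint\samiterp}]\cdot\mathrm{kl}\bigl(\prefmat_{\samiterint\samiterp},\prefmat'_{\samiterint\samiterp}\bigr), \]
where $N_{\samiterint\samiterp}$ is the number of duels of the pair $\{\samiterint,\samiterp\}$ (so $\sum_{\samiterp}N_{\samiterint\samiterp}=\dueltime{\samiterint}^{\algo}$) and $\mathrm{kl}(\cdot,\cdot)$ is the binary relative entropy. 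The Gaussian term equals $\tfrac{(2\labelgap{\samiterint})^2}{2\sgpara^2}=\tfrac{2}{\sgpara^2}(\labelgap{\samiterint})^2$; each Bernoulli term satisfies $\mathrm{kl}(\prefmat_{\samiterint\samiterp},\prefmat'_{\samiterint\samiterp})\le\tfrac{(\prefmat_{\samiterint\samiterp}-\prefmat'_{\samiterint\samiterp})^2}{\prefmat'_{\samiterint\samiterp}(1-\prefmat'_{\samiterint\samiterp})}\le\tfrac{64}{7}(\compgap{\samiterint}+\eta)^2$ by $\mathrm{kl}\le\chi^2$ and $\prefmat'_{\samiterint\samiterp}(1-\prefmat'_{\samiterint\samiterp})\ge\tfrac{7}{64}$ (the case where $\algo$ names the pair as $(\samiterp,\samiterint)$ is symmetric and gives the same bound via $\prefmat'_{\samiterp\samiterint}(1-\prefmat'_{\samiterp\samiterint})\ge\tfrac{7}{64}$). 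Letting $\eta\to0$ gives $\mathrm{KL}(\mathbb{P}_1\,\|\,\mathbb{P}_2)\le\tfrac{2}{\sgpara^2}\E_1[(\labelgap{\samiterint})^2\pulltime{\samiterint}^{\algo}]+\tfrac{64}{7}\E_1[(\compgap{\samiterint})^2\dueltime{\samiterint}^{\algo}]\le\max\{\tfrac{2}{\sgpara^2},\tfrac{64}{7}\}\cdot\E_1[(\compgap{\samiterint})^2\dueltime{\samiterint}^{\algo}+(\labelgap{\samiterint})^2\pulltime{\samiterint}^{\algo}]$.

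Finally I would close with the usual lower bound on $\mathrm{KL}(\mathbb{P}_1\,\|\,\mathbb{P}_2)$ in terms of the error probability. Because the correct output differs between the two instances (at arm $\samiterint$), the event $\event=\{\algo\text{ outputs }\posset\}$ has $\Pr_1[\event]\ge1-\tol$ and $\Pr_2[\event]\le\tol$, so the data-processing inequality gives $\mathrm{KL}(\mathbb{P}_1\,\|\,\mathbb{P}_2)\ge\mathrm{kl}(1-\tol,\tol)=(1-2\tol)\log\tfrac{1-\tol}{\tol}$, and an elementary check shows this is at least $\log\tfrac{1}{2\tol}$ for $\tol\le0.15$. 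Combining with the previous display and $1/\max\{\tfrac{2}{\sgpara^2},\tfrac{64}{7}\}=\min\{\tfrac{\sgpara^2}{2},\tfrac{7}{64}\}\ge\min\{\tfrac{\sgpara^2}{2},\tfrac{1}{10}\}=c$ yields $\E_{\prefmat,\meanvec}[(\compgap{\samiterint})^2\dueltime{\samiterint}^{\algo}+(\labelgap{\samiterint})^2\pulltime{\samiterint}^{\algo}]\ge c\log\tfrac{1}{2\tol}$, which is the claimed bound~\eqref{eqn:lower_armwise}.
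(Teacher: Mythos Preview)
Your proposal is correct and follows essentially the same route as the paper: a two-point change-of-measure where only row~$\samiterint$ of $\prefmat$ is shifted down by $\compgap{\samiterint}=\pbbeat{\samiterint}-\pbbeat{\upthres}$ and $\sammean{\samiterint}$ is reflected across $\thres$, then Kaufmann--Capp\'e--Garivier's transportation lemma together with the $\chi^2$ bound on Bernoulli KL (using $\prefmat'_{\samiterint\samiterp}\in[\tfrac18,\tfrac78]$ from the $\tfrac38$ hypothesis) and the Gaussian KL for pulls. Two incidental remarks: your auxiliary $\eta>0$ is not actually needed, since even with $\eta=0$ the other arms' Borda scores all increase by $(\compgap{\samiterint})/(\narm-1)>0$ in the alternative instance, giving the strict inequality $\pbbeat{\samiterint}'=\pbbeat{\upthres}<\pbbeat{\upthres}'$ for free; and your constant $64/7$ is the sharp version of the paper's rounded $10$, which you then relax back to recover exactly the stated $c=\min\{\tfrac1{10},\tfrac{\sgpara^2}{2}\}$.
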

\ssp
Theorem \ref{thm:lower} shows an arm-wise lower bound that the sum of duels and pulls (weighted by their complexity) must satisfy (\ref{eqn:lower_armwise}). In the pull-only setting, this agrees with the known result that number of pulls needed for an arm $\samiterint$ is $\Omega((\labelgap{\samiterint})^{-2})$. And for duel-choice setting, it shows that if we never pull arm $\samiterint$, number of duels involving arm $\samiterint$ (against some known arm) is at least $\Omega((\compgap{\samiterint})^{-2})$. From our proof of Theorem \ref{thm:upper}, we can easily show the following proposition for the upper bound that \algoname achieves:
\begin{prop}\label{prop:lower}
	For any problem instance $(\prefmat,\meanvec)$ and arm $\samiterint$, Algorithm  \algoname succeeds with probability at least $1-\delta$ and there exists a constant $C$ such that the \algoname algorithm achieves that
	\begin{equation}
	\E_{\prefmat,\meanvec}[(\diffcomp_{\samiterint})^2\dueltime{\samiterint}^{\algoname}+(\labelgap{\samiterint})^2\pulltime{\samiterint}^{\algoname}]\leq C\log\left(\frac{\narm \log(\frac{K}{\comperr^*\lowestgaplabel}) }{\tol}\right).\label{eqn:upper_armwise}
	\end{equation}
	\[ \]
\end{prop}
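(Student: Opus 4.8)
The plan is to read off the per-arm query bounds directly from the analysis of Theorem \ref{thm:upper}, and then argue separately for the three possible fates of arm $\samiterint$: it is labeled during some rank phase (Line \ref{step:infer_pos} or \ref{step:infer_neg}), it is labeled by a call to \figurelabelname inside \bsalgoname, or (trivially) it could be both if it sits on a binary-search path. For the correctness claim, I would simply invoke the high-probability success event of Theorem \ref{thm:upper}, which already guarantees \algoname succeeds with probability $1-\tol$; the union bound over all phases and all \figurelabelname calls is exactly the one carried out there, so nothing new is needed.

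First I would bound $\dueltime{\samiterint}^{\algoname}$. Arm $\samiterint$ receives comparisons only while it remains in the working set $\workset$, and in iteration $\itercount$ each surviving arm is compared $\Theta\!\left(\comperr_\itercount^{-2}\log\frac{|S|(\itercount+1)^2}{\tol}\right)$ times. On the success event, arm $\samiterint$ is removed from $\workset$ once $\comperr_\itercount$ drops below a constant multiple of $\diffcomp_{\samiterint}$ — this is precisely the argument that drives the $\compcpl$ bound in Theorem \ref{thm:upper}, applied to a single arm rather than summed. Hence the total number of duels involving $\samiterint$ is $O\!\left((\diffcomp_{\samiterint})^{-2}\log\frac{\narm\log(1/\comperr^*)}{\tol}\right)$, so that $(\diffcomp_{\samiterint})^2\dueltime{\samiterint}^{\algoname} = O\!\left(\log\frac{\narm\log(1/\comperr^*)}{\tol}\right)$, geometric summation over $\itercount$ absorbing only a constant because $\comperr_\itercount$ shrinks geometrically. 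Next, for $\pulltime{\samiterint}^{\algoname}$: arm $\samiterint$ is pulled only if \figurelabelname is invoked on it, and whenever that happens the doubling scheme in Algorithm \ref{algo:figure_out_label} terminates, on the success event, once $\gamma < \labelgap{\samiterint}$, i.e.\ after $O\!\left(R^2(\labelgap{\samiterint})^{-2}\log\frac{\log(1/\lowestgaplabel)}{\tol}\right)$ pulls — this is the standard single-arm TBP analysis already used in the proof of Theorem \ref{thm:upper}. Since \figurelabelname is called on a given arm at most a constant number of times across the whole run (each call labels the arm and removes it), $(\labelgap{\samiterint})^2\pulltime{\samiterint}^{\algoname} = O\!\left(\log\frac{\log(1/\lowestgaplabel)}{\tol}\right)$.

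Adding the two contributions and bounding each logarithmic factor by the common expression $\log\!\left(\frac{\narm\log(K/(\comperr^*\lowestgaplabel))}{\tol}\right)$ — which dominates both $\log\frac{\narm\log(1/\comperr^*)}{\tol}$ and $\log\frac{\log(1/\lowestgaplabel)}{\tol}$ since $\log(xy)\le\log x+\log y$ and all quantities are $\ge 1$ — yields the claimed inequality with a suitable absolute constant $C$. I would state the success probability as $1-\delta$ to match the proposition, noting $\delta$ here plays the role of $\tol$.

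The main obstacle is purely bookkeeping rather than conceptual: I must make sure the per-arm removal criterion is actually the one the algorithm uses. In Line \ref{step:infer_pos}/\ref{step:infer_neg} an arm is pruned only when its estimated Borda score is separated from the boundary arm's estimate by $2\comperr_\itercount$, and relating this to the true gap $\diffcomp_{\samiterint}$ (as opposed to $\compgap{\samiterint}$) requires the same case analysis — using an intermediate arm $j\in\posset$ with $\pbbeat{\upthres}\le\pbbeat{j}<\pbbeat{\samiterint}$ — that underlies the definition of $\diffcomp$ and the proof of Theorem \ref{thm:upper}. I would reference that argument rather than reproduce it. A secondary subtlety is that an arm on the binary-search path may be both compared and pulled; but since the bound is a sum of the two weighted counts, handling them independently and adding only over-counts by a constant, so the stated inequality still holds.
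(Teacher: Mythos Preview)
Your proposal is correct and takes essentially the same approach as the paper: the paper's own ``proof'' is the single sentence ``From our proof of Theorem~\ref{thm:upper}, we can easily show the following proposition,'' and your sketch does precisely that, extracting the per-arm duel bound from the $4\comperr_\itercount<\diffcomp_{\samiterint}$ elimination criterion and the per-arm pull bound from the termination condition of \figurelabelname. The bookkeeping you flag (the intermediate arm $j$ in the $\diffcomp$ argument, and the fact that a labeled arm is removed from $\workset$ so \figurelabelname is invoked on it at most once) is exactly what the proof of Theorem~\ref{thm:upper} already establishes, so referencing it rather than reproducing it is appropriate.
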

Comparing (\ref{eqn:upper_armwise}) with (\ref{eqn:lower_armwise}), our \algoname algorithm is arm-wise optimal except for the difference of $\compgap{\samiterint}$ vs. $\diffcomp_{\samiterint}$, and the log factors. This shows that \algoname is near optimal in the sum $\E_{\prefmat,\meanvec}[(\compgap{\samiterint})^2\dueltime{\samiterint}^{\algo}+(\labelgap{\samiterint})^2\pulltime{\samiterint}^{\algo}]$.
\ssp
\subsection{Optimality of $\numduel$ and $\numpull$ \label{sec:lower_all}}
\ssp

In this subsection, we analyze the lower bound of \problemname under the case when duels are much cheaper than pulls. In this case, we would like to minimize the number of pulls, and then minimize the number of duels. Intuitively, \algoname algorithm is optimal in $\numpull$ as it uses roughly $O(\log \narm)$ pulls; this is necessary even if we know a perfect ranking of all arms (due to the complexity of binary search). We consider an extreme case, where we know the label of arm $\upthres$ and $\lowthres$ from pulls, and wish to infer all other labels using duels. The following corollary of Theorem \ref{thm:lower} shows a lower bound in this case:
\begin{corollary}\label{col:lower_lowpull}
	Suppose an algorithm $\algo$ is given that $\upthres\in \posset$ and $\lowthres\in \negset$, and uses only duels. Under the same assumption as in Theorem \ref{thm:lower}, the number of duels of $\algo$ is at least $\E[n_{\text{duel}}^\algo] \geq c\compcpllow\log(1/2\tol)$.
\end{corollary}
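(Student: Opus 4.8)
The plan is to obtain Corollary~\ref{col:lower_lowpull} as a direct consequence of the arm-wise bound~(\ref{eqn:lower_armwise}) in Theorem~\ref{thm:lower}, summed over all arms other than the two anchors $\upthres,\lowthres$. The only feature special to the corollary is that $\algo$ never pulls, so $\pulltime{\samiter}^{\algo}=0$ for every $\samiter$; substituting this into~(\ref{eqn:lower_armwise}) eliminates the pull term and leaves, for each fixed arm $\samiterint\notin\{\upthres,\lowthres\}$,
\[
\E_{\prefmat,\meanvec}\!\left[\dueltime{\samiterint}^{\algo}\right]\;\ge\;\frac{c\log(1/2\tol)}{(\compgap{\samiterint})^{2}} .
\]

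Two points need checking before summing. First, Theorem~\ref{thm:lower} is stated for a positive arm $\samiterint\in\posset$; I would invoke the mirror-image statement for $\samiterint\in\negset$, obtained by the symmetric change of measure that nudges $\pbbeat{\samiterint}$ up to the boundary $\pbbeat{\lowthres}$ (here $\compgap{\samiterint}=\pbbeat{\lowthres}-\pbbeat{\samiterint}$) and swaps the roles of $\upthres$ and $\lowthres$. The hypothesis $\prefmat_{\samiter\samiterp}\ge 3/8$ for all $\samiter,\samiterp$ is invariant under this reflection, since (with $\prefmat_{\samiterp\samiter}=1-\prefmat_{\samiter\samiterp}$) it is the same as requiring $\prefmat_{\samiter\samiterp}\in[3/8,5/8]$. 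Second, the corollary hands $\algo$ the extra knowledge that $\upthres\in\posset$ and $\lowthres\in\negset$, and I need this knowledge not to help. This holds because the hard pair of instances built for a fixed arm $\samiterint\notin\{\upthres,\lowthres\}$ perturbs only the preference entries incident to $\samiterint$, by an amount of order $\compgap{\samiterint}$, and in both instances the arms $\upthres$ and $\lowthres$ keep their labels and remain extremal on their respective sides; hence both instances are consistent with the promised side information, and the change-of-measure lower bound applies verbatim to any algorithm given that information.

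With the per-arm bounds available for every $\samiter\in\armset\setminus\{\upthres,\lowthres\}$, I would sum them,
\[
\sum_{\samiter\in\armset\setminus\{\upthres,\lowthres\}}\E_{\prefmat,\meanvec}\!\left[\dueltime{\samiter}^{\algo}\right]\;\ge\;c\log(1/2\tol)\!\!\sum_{\samiter\in\armset\setminus\{\upthres,\lowthres\}}\!\!\frac{1}{(\compgap{\samiter})^{2}}\;=\;c\,\compcpllow\log(1/2\tol),
\]
and then convert to the total duel count. Since every duel involves two arms and therefore increments two of the counters $\dueltime{\samiter}^{\algo}$, we have $\sum_{\samiter\in\armset}\dueltime{\samiter}^{\algo}=2\,n_{\text{duel}}^{\algo}\ge\sum_{\samiter\in\armset\setminus\{\upthres,\lowthres\}}\dueltime{\samiter}^{\algo}$, which already yields $\E[n_{\text{duel}}^{\algo}]=\Omega\!\big(\compcpllow\log(1/\tol)\big)$; tracking the factor $2$ and the $\log(1/2\tol)$-vs-$\mathrm{kl}(\tol,1-\tol)$ step exactly as in the proof of Theorem~\ref{thm:lower} recovers the stated constant $c$.

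I expect the summation and the $\dueltime{}$-to-$n_{\text{duel}}$ bookkeeping to be entirely routine; the only place that requires any care is confirming that the single-arm perturbations underlying Theorem~\ref{thm:lower} are legitimate \problemname instances — i.e.\ that after nudging $\pbbeat{\samiterint}$ one still has $\pbbeat{i}>\pbbeat{j}$ for all $i\in\posset$, $j\in\negset$, that $\prefmat_{\samiter\samiterp}$ stays $\ge 3/8$, and that the two anchors are untouched — but this is precisely the construction already used to prove Theorem~\ref{thm:lower}, so the corollary needs no new ideas.
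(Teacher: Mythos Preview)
Your proposal is correct and matches the paper's approach exactly: the paper's proof simply notes that $\pulltime{\samiterint}=0$ for all $\samiterint$ in the duel-only setting, applies Theorem~\ref{thm:lower} to obtain the per-arm bound $\E[\dueltime{\samiterint}^{\algo}]\ge c\log(1/2\tol)/(\compgap{\samiterint})^{2}$, and sums over $\samiterint\notin\{\upthres,\lowthres\}$. Your additional remarks on the $\negset$ symmetry, the consistency of the side information under the single-arm perturbation, and the factor-of-two bookkeeping in passing from $\sum_{\samiter}\dueltime{\samiter}$ to $n_{\text{duel}}$ are careful checks that the paper leaves implicit.
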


Combining Corollary \ref{col:lower_lowpull} with the fact that $O(\log \narm)$ is necessary for \problemname, we 
show that \algoname is near optimal in both $\numduel$ and $\numpull$.

\msp
\section{Experiments\label{sec:expr}}
\ssp
To verify our theoretical insights, we perform experiments on a series of settings to illustrate the efficacy of \algoname, on both synthetic and real-world data. For comparison, we include the state-of-art CLUCB in the pull-only setting, and several naive baselines.\\

\begin{figure*}[th!]
	\centering
	\begin{subfigure}[b]{0.33\textwidth}
		\centering
		\includegraphics[width=\textwidth]{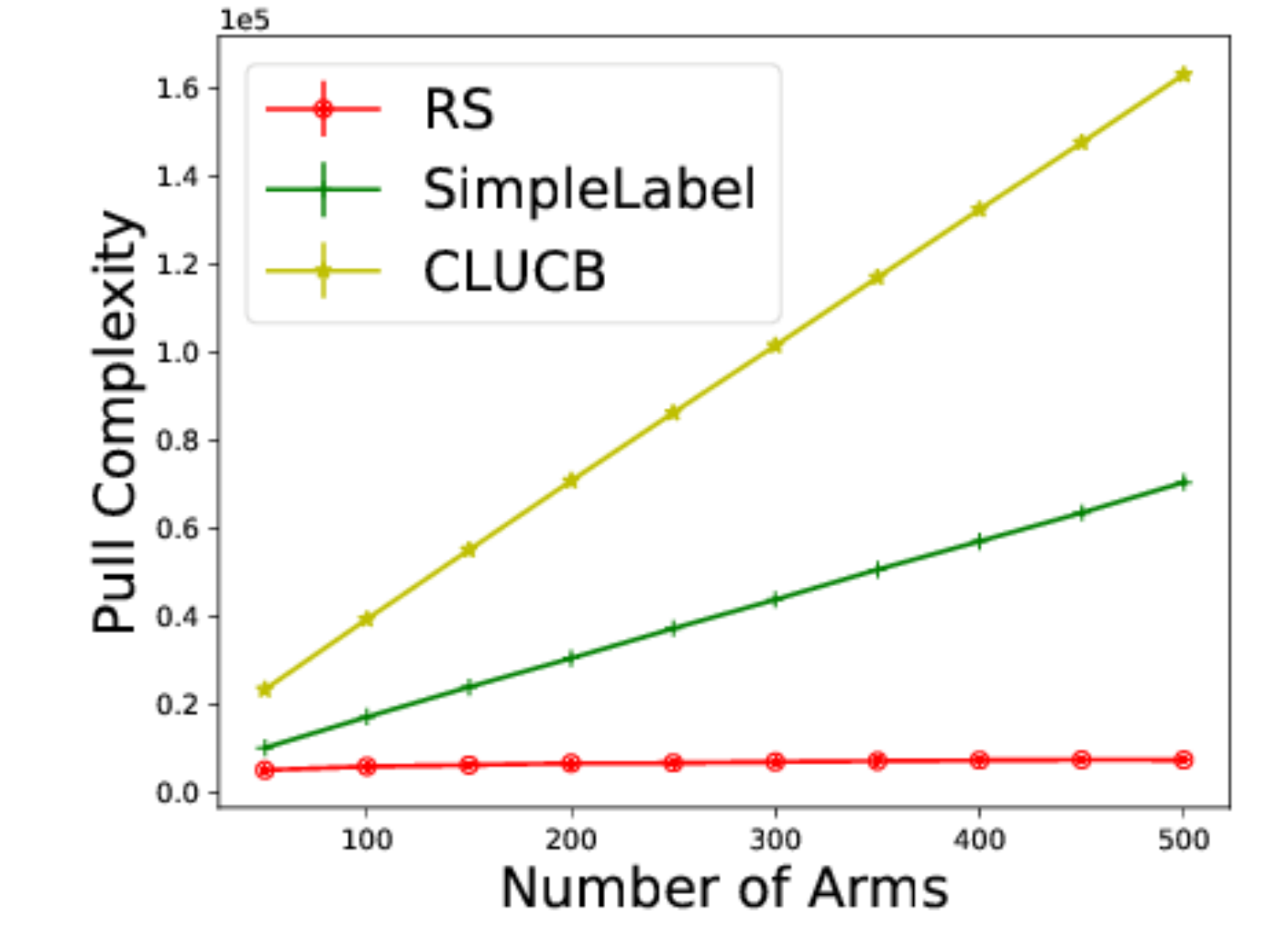}
		\caption{harmonic}
	\end{subfigure}%
	\begin{subfigure}[b]{0.33\textwidth}
		\centering
		\includegraphics[width=\textwidth]{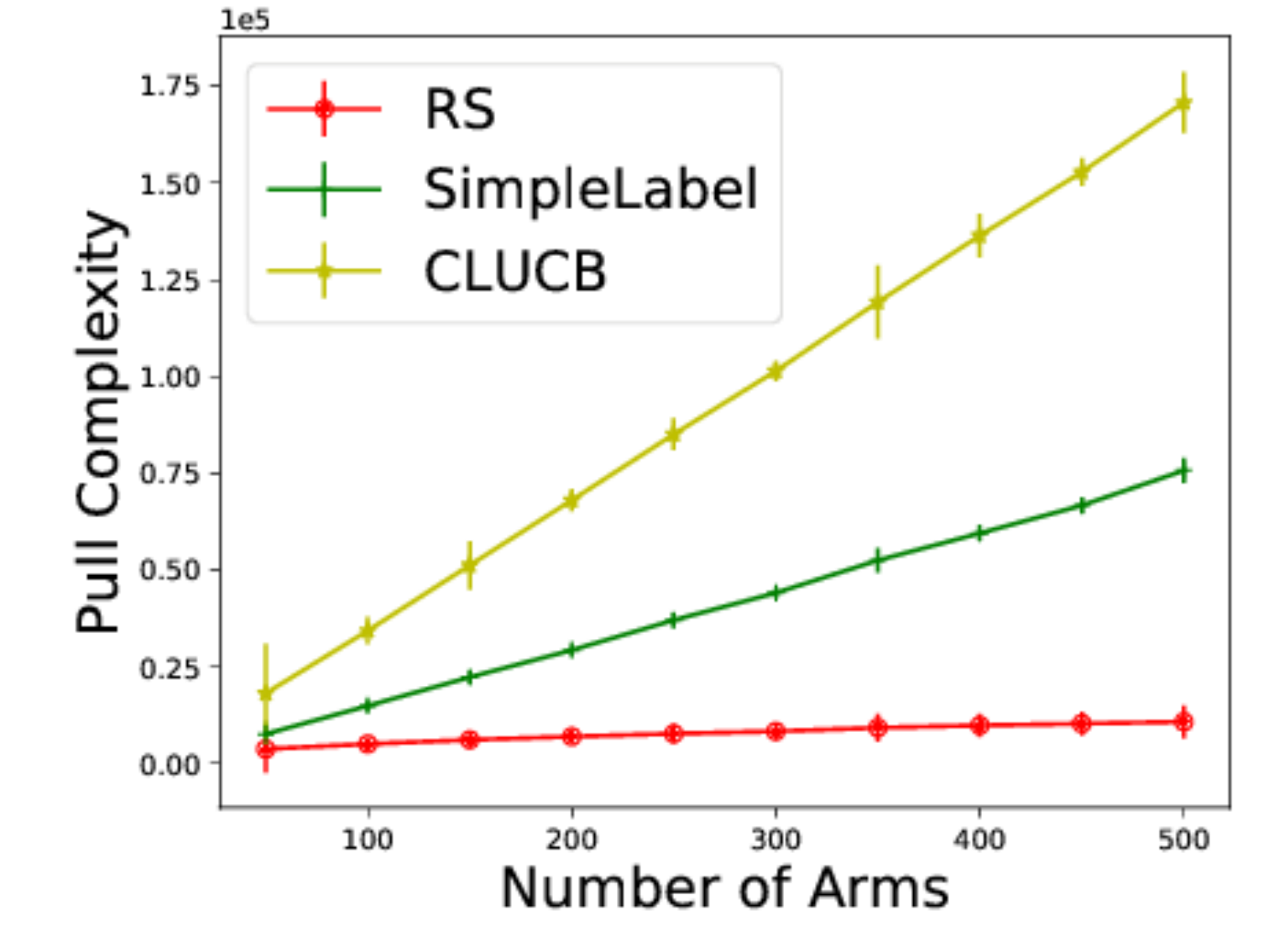}
		\caption{exponential}
	\end{subfigure}
	\begin{subfigure}[b]{0.33\textwidth}
		\centering
		\includegraphics[width=\textwidth]{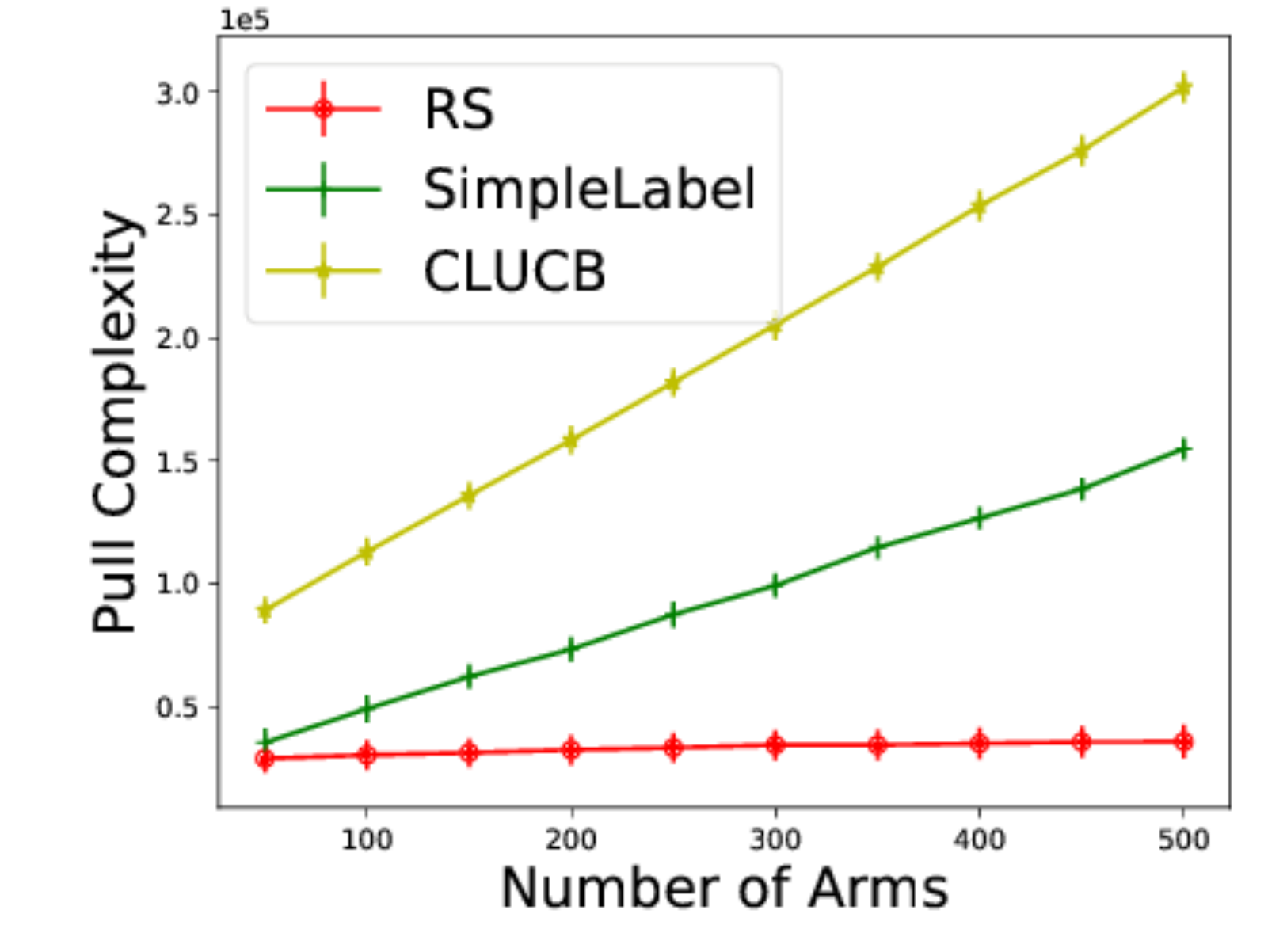}
		\caption{3 groups}
	\end{subfigure}
	\caption{Empirical results comparing \algoname and other baselines. Error bars represent standard deviation across 500 experiments.\label{fig:expr_result_syn}}
	\lsp
\end{figure*}

\subsection{Data Configuration}
\ssp

For synthetic data, we vary the number of arms $\narm$ from 50 to 500, and set threshold $\thres=0.5$. The duels follows from a linear link function $\Pr[\samiter\succ \samiterp]=\frac{1+\sammean{\samiter}-\sammean{\samiterp}}{2}$ \footnote{We include the results with a BTL model in the appendix.}. Let $\narm=2l$, the mean rewards are given as below:

\textbf{Experiment 1 (harmonic):} This is Example 1 from Section \ref{sec:example_upper}.\\
\textbf{Experiment 2 (exponential):} This is Example 2 from Section \ref{sec:example_upper}.\\ 
\textbf{Experiment 3 (3groups):} This is similar to the example in \citep{locatelli2016optimal}. Let $\sammean{i}=0.1$ for $i=1,2,...,l-2$, $\sammean{(l-1):(l+2)}=(0.35,0.45,0.55,0.65)$, and $\sammean{i}=0.9$ for $i=l+3,...,\narm$.\\

For real-world data, we use the reading difficulty dataset collected by \cite{chen2013pairwise}. The data consists of 491 passages, each with a reading difficulty level ranged in 1-12. We randomly take $\narm$ passages from the whole set, with $\narm$ varying from 50 to 491. Let $\mu_i=l_i/13$, where $l_i$ is the difficulty level of passage $i$. The goal here is to identify the difficult passages with level at least 7, or equivalently $\thres=0.5$. Although the original dataset from \cite{chen2013pairwise} comes with comparisons, it does not cover all pairs and we therefore use a probabilistic model to generate comparison feedbacks. Specifically, we experiment with two types of comparison models: i) linear link function $\Pr[\samiter\succ \samiterp]=\frac{1+\theta(\sammean{\samiter}-\sammean{\samiterp})}{2}$; ii) BTL model: \(\Pr[i\succ j]=\frac{1}{1+e^{(\sammean{j}-\sammean{i})\theta}}\). For both model, we find the $\theta$ that maximizes the log likelihood based on comparisons data provided in \citep{chen2013pairwise}. Hypothesis testing against a null hypothesis ($\Pr[i\succ j]=1/2$) gives $p$-values less than $1\times 10^{-4}$ for both models.


\subsection{Baselines and Implementation Details}
We compare performance of the following methods. 

\textbf{CLUCB}\citep{chen2014combinatorial}: We implement the CLUCB algorithm which only queries for selective direct pulls in a TBP setting. \\
\textbf{SimpleLabel}: This is a simple pull-only baseline where we apply \figurelabelname to all the arms $\samiter\in \armset$ with confidence $\tol/\narm$.\\
\textbf{RankThenSearch}: As we discussed in introduction, we compare to the baselines where we first use a ranking algorithm to rank all the arms, and then perform a binary search to find the boundary.
We consider two methods for the first ranking step. i) \textsf{ActiveRank}\citep{heckel2016active}: An active ranking algorithm that achieves optimal rates based on Borda scores. ii) \textsf{PLPAC-AMPR}\citep{szorenyi2015online}: Another ranking algorithm that focuses on BTL model. After the ranking algorithm we run a single binary search on the sorted sequence, using $\figurelabelname$ to identify labels. 
\\
\textbf{RankSearch:} Our algorithm. The parameters of our algorithms are the initial confidence $\comperr_0$ and shrinking factor $\shrinkfactor$. Both of them decides how aggressive we decrease our confidence: A small $\comperr_0$ will lead to a starting point with high confidence, and a large $\shrinkfactor$ will increase the confidence level quickly. Both of them will lead to a higher number of duels.
In our implementation, we pick $\comperr_0$ adaptively so that $\max \estprob{\bs}-\min \estprob{\bs}\geq 2\comperr_0$ (see Appendix for details), and use $\shrinkfactor=2$.

We note that previous works on TBP in the fixed budget setting \citep{locatelli2016optimal,mukherjee2017thresholding} cannot be implemented in our fixed-confidence setting.

We run all the methods
with varying number of arms, and compare their performance to reach confidence $\tol=0.95$.
For complexity notion, since there is no pre-defined cost ratio between duels and pulls, we compare the pull and duel complexity of \algoname separately with the baselines. Specifically, we compare pull complexity with SimpleLabel and CLUCB, and compare duel complexity with RankThenSearch (since the other two baselines are pull-only algorithms). Each experiment is repeated 500 times, and we compute the mean and standard deviation of each baseline's performance.
\begin{figure}[h!]
	\centering
	\begin{subfigure}[b]{0.35\textwidth}
		\centering
		\includegraphics[width=\textwidth]{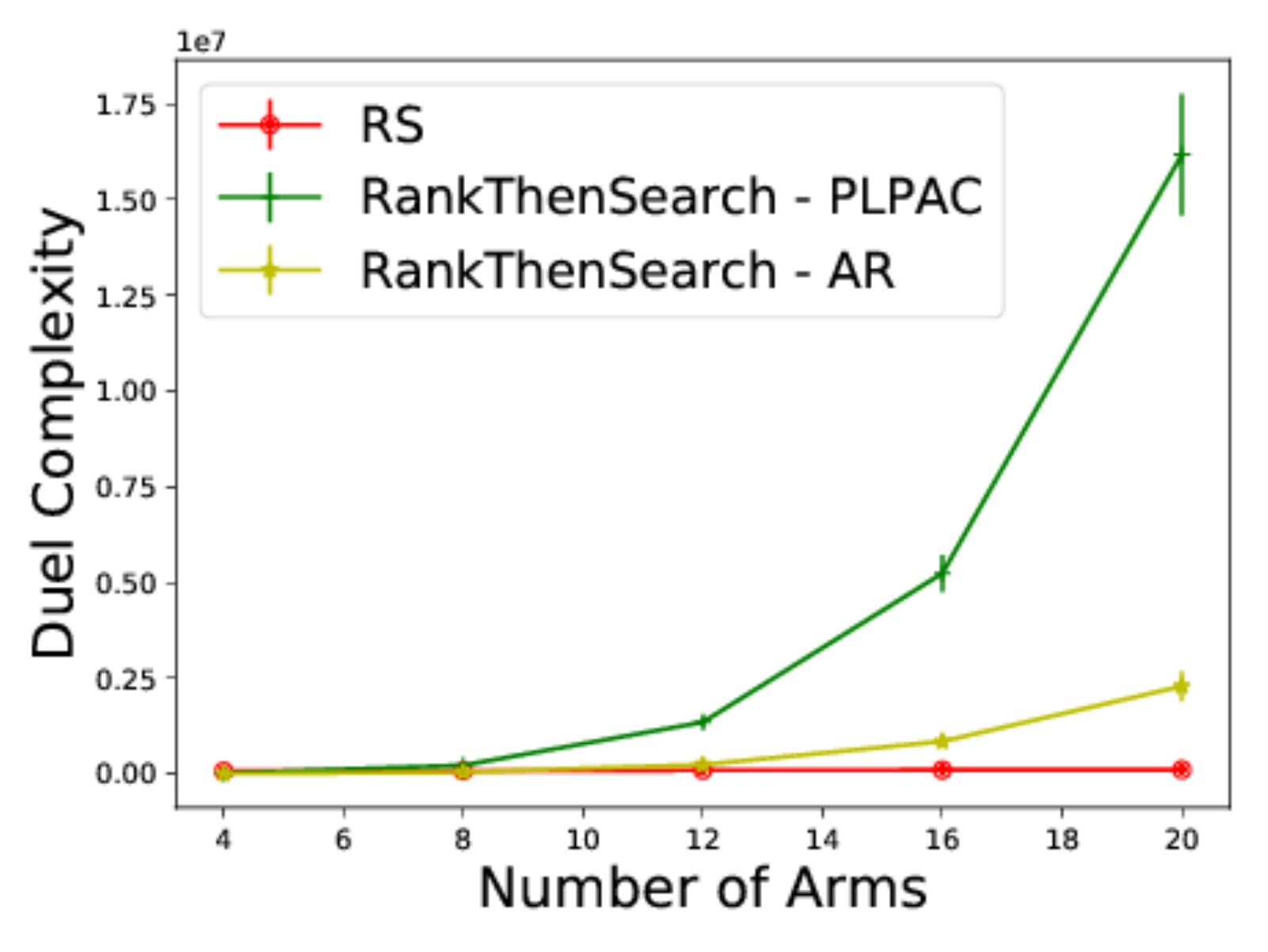}
		\caption{harmonic}
	\end{subfigure}%
	\hfill
	\begin{subfigure}[b]{0.35\textwidth}
		\centering
		\includegraphics[width=\textwidth]{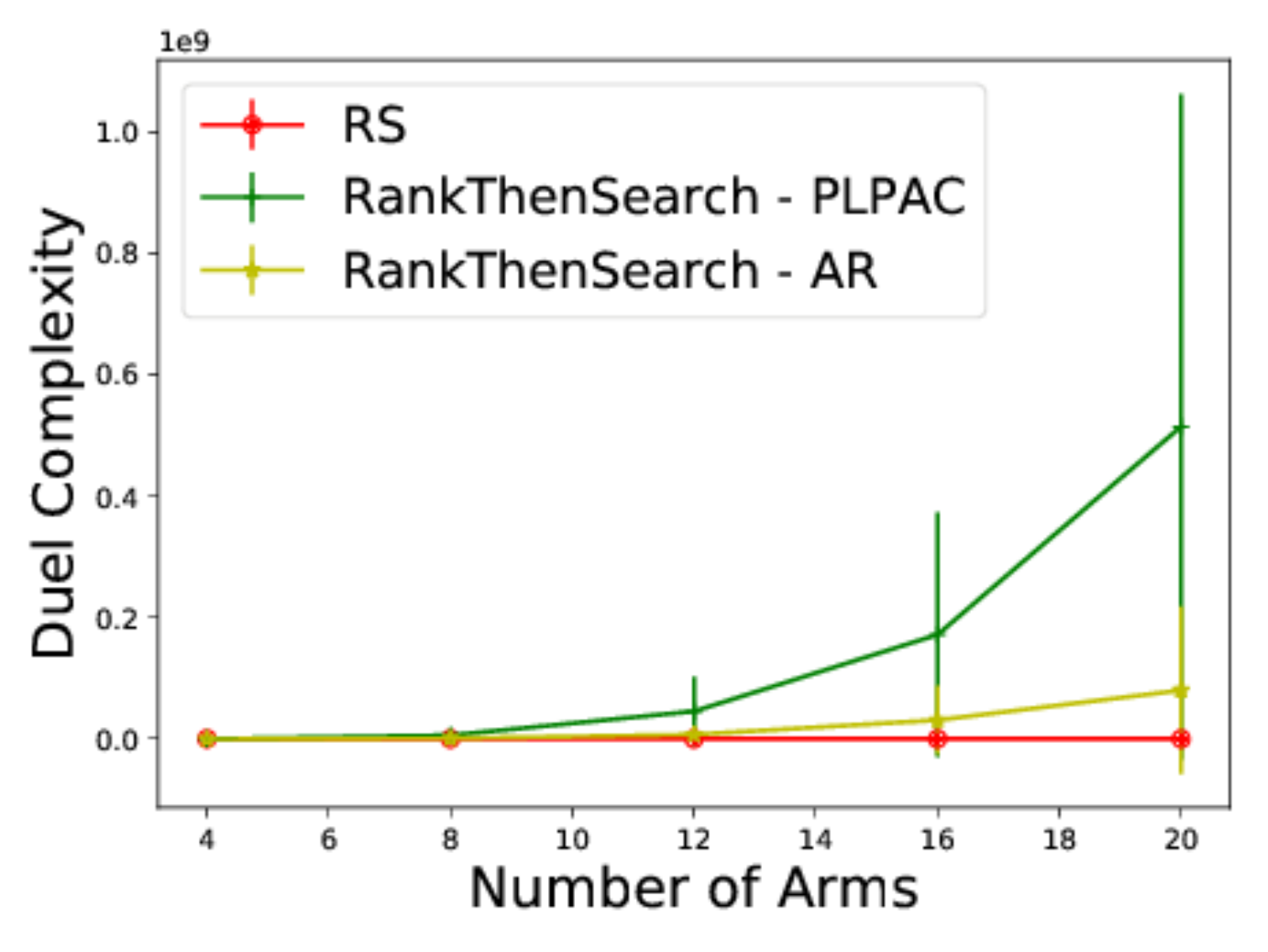}
		\caption{exponential}
	\end{subfigure}
	
	\caption{Empirical results comparing \algoname and RankThenSearch. Error bars represent standard deviation across 500 experiments. PLPAC is short for PLPAC-AMPR. \label{fig:expr_result_duelonly}}
\end{figure}

\ssp
\subsection{Experiment Results \label{sec:expr_res}}
\ssp

\textbf{Results on synthetic data.} In Figure \ref{fig:expr_result_syn}, we plot the empirical pull complexity of \algoname along with the baselines of CLUCB and SimpleLabel. As expected, the number of pulls of \algoname is much lower than the baseline algorithms in all three experiments we consider. Interestingly, SimpleLabel also has an advantage over CLUCB in the pull-only setting. We note that CLUCB's $O(\dlcpl\log(\frac{\dlcpl}{\tol}))$ is only optimal up to $\log(\dlcpl)$ factors, and SimpleLabel might have an advantage because its pull complexity is $O(\dlcpl\log(\frac{K\log\lowestgaplabel}{\tol}))$ in the pull-only setting, slightly better than CLUCB. This advantage and the optimal rate for the pull-only setting is of independent interest and we leave it as future work.

We then compare the duel complexity with RankThenSearch in Figure \ref{fig:expr_result_duelonly}. Since RankThenSearch needs to differentiate between every pair of arms, the algorithms take extremely long to run and we have to limit the arms to be at most 20 (as is done in \cite{szorenyi2015online}).
Note that since in \textbf{3groups} the arms are not separable, we only compare to RankThenSearch in the first two settings. The results show that RankThenSearch with ActiveRank and PLPAC-AMPR both acquires an incredible number of duels in order to rank the arms: to rank 20 arms they acquire hundreds of millions ($1\times 10^8$) of duels, for the \textbf{exponential} arm setup. This prohibitive cost makes it impossible to adopt the RankThenSearch method. We also observed a very large variance in performance for RankThenSearch, because differentiating arms close to each other is very unstable. Dueling complexity of \algoname is much lower and more stable than the above methods, and therefore \algoname achieves a balance between duels and pulls.

\begin{figure}[tbp!]
	\centering
	\begin{subfigure}[b]{0.35\textwidth}
		\centering
		\includegraphics[width=\textwidth]{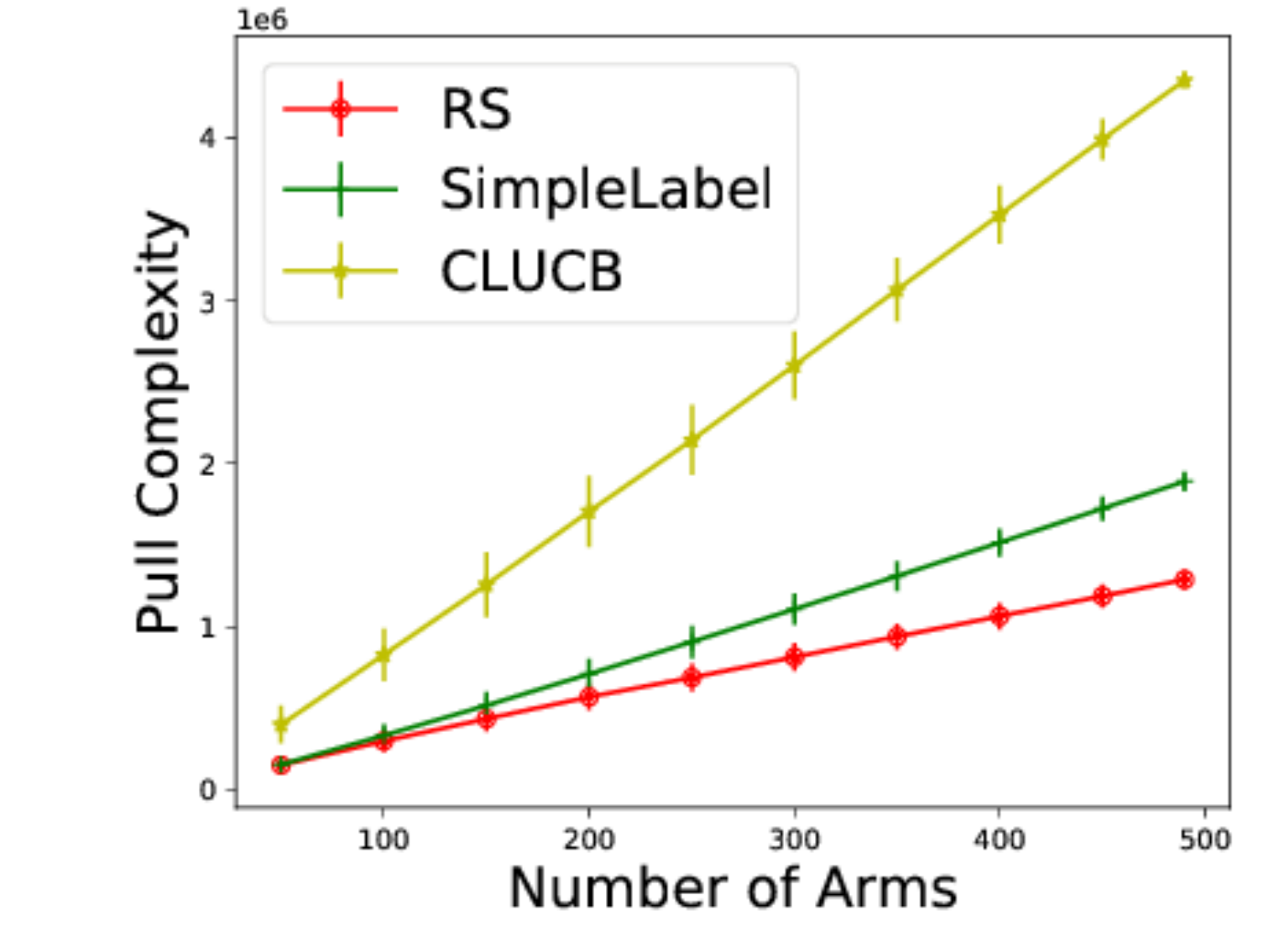}
		\caption{Linear Link Function}
	\end{subfigure}
	\begin{subfigure}[b]{0.35\textwidth}
		\centering
		\includegraphics[width=\textwidth]{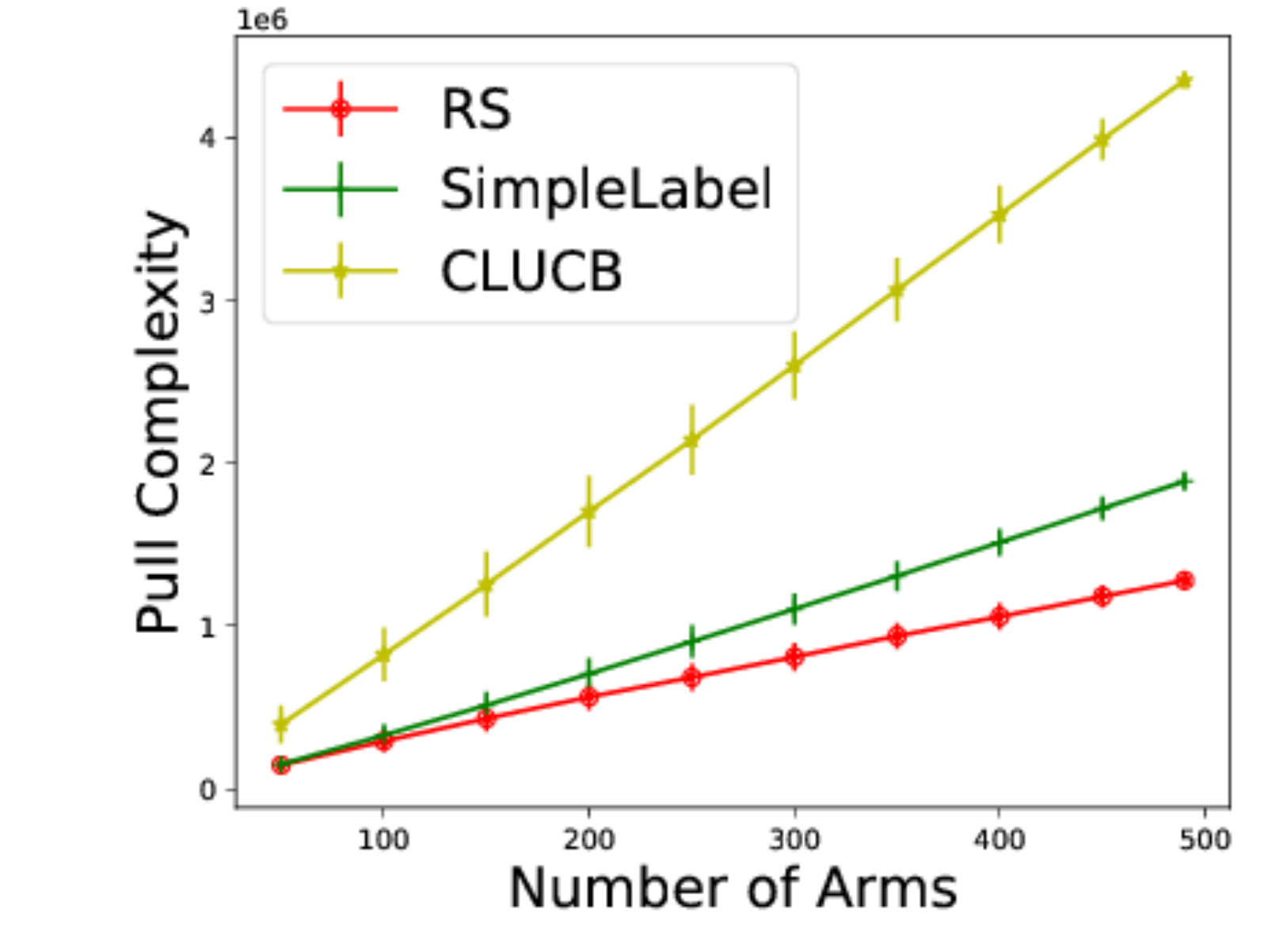}
		\caption{BTL Link Function}
	\end{subfigure}	
	\caption{Empirical results comparing \algoname and other baselines. Error bars represent standard deviation across 500 experiments.\label{fig:expr_result_real}}
	\lsp
\end{figure}

\textbf{Results on real-world data.} Finally, we compare the pull complexity between \algoname and the pull-only baselines on real-world data in Figure \ref{fig:expr_result_real}. \algoname still performs better than both baselines for the real data, but the advantage of \algoname over the baselines are lower than on synthetic data. This is possibly because the data contains many passages near the boundary (i.e., grade 6 and 7), and \algoname have to use pulls to identify their label. We verify this empirically in the appendix.


\msp
\section{Conclusion}
\ssp

We formulate a new setting of the Thresholding Bandit Problem with Dueling Choices, and provide the \algoname algorithm, along with upper and lower bounds on its performance. For future work, it would be interesting to tighten the upper and lower bounds to match them; We believe it should be possible to improve the lower bound by randomizing the arms closest to the threshold. It would also be interesting to develop algorithms adapting to varying noise levels in comparisons.


\ssp
\subsubsection*{Acknowledgements}
\ssp
This work has been supported in part by DARPA FA8750-17-2-0130, NSF
CCF-1763734 and IIS-1845444, and AFRL FA8750-17-2-0212.

\clearpage
\bibliography{yichongref}

\begin{thebibliography}{}

\bibitem[Abbasi-Yadkori et~al., 2016]{Chen:16KL}
Abbasi-Yadkori, Y., Bartlett, P., Chen, X., and Malek, A. (2016).
\newblock Large-scale markov decision problems with kl control cost and its
  application to crowdsourcing.
\newblock In {\em In Proceedings of the International Conference on Machine
  Learning (ICML)}.

\bibitem[Bradley and Terry, 1952]{bradley1952rank}
Bradley, R.~A. and Terry, M.~E. (1952).
\newblock Rank analysis of incomplete block designs: I. the method of paired
  comparisons.
\newblock {\em Biometrika}, 39(3/4):324--345.

\bibitem[Bubeck et~al., 2013]{Bubeck:13}
Bubeck, S., Wang, T., and Viswanathan, N. (2013).
\newblock Multiple identifications in multi-armed bandits.
\newblock In {\em Proceedings of the International Conference on Machine
  Learning (ICML)}.

\bibitem[Chen et~al., 2017]{Chen:17adaptive}
Chen, J., Chen, X., Zhang, Q., and Zhou, Y. (2017).
\newblock Adaptive multiple-{A}rm identification.
\newblock In {\em In Proceedings of the International Conference on Machine
  Learning (ICML)}.

\bibitem[Chen et~al., 2014]{chen2014combinatorial}
Chen, S., Lin, T., King, I., Lyu, M.~R., and Chen, W. (2014).
\newblock Combinatorial pure exploration of multi-armed bandits.
\newblock In {\em Advances in Neural Information Processing Systems}, pages
  379--387.

\bibitem[Chen et~al., 2013]{chen2013pairwise}
Chen, X., Bennett, P.~N., Collins-Thompson, K., and Horvitz, E. (2013).
\newblock Pairwise ranking aggregation in a crowdsourced setting.
\newblock In {\em Proceedings of the sixth ACM international conference on Web
  search and data mining}, pages 193--202. ACM.

\bibitem[Chen et~al., 2015]{chen2015statistical}
Chen, X., Lin, Q., and Zhou, D. (2015).
\newblock Statistical decision making for optimal budget allocation in crowd
  labeling.
\newblock {\em The Journal of Machine Learning Research}, 16(1):1--46.

\bibitem[Fishburn, 1973]{fishburn1973binary}
Fishburn, P.~C. (1973).
\newblock Binary choice probabilities: on the varieties of stochastic
  transitivity.
\newblock {\em Journal of Mathematical psychology}, 10(4):327--352.

\bibitem[Heckel et~al., 2016]{heckel2016active}
Heckel, R., Shah, N.~B., Ramchandran, K., and Wainwright, M.~J. (2016).
\newblock Active ranking from pairwise comparisons and when parametric
  assumptions don't help.
\newblock {\em arXiv preprint arXiv:1606.08842}.

\bibitem[Kane et~al., 2017]{kane2017active}
Kane, D.~M., Lovett, S., Moran, S., and Zhang, J. (2017).
\newblock Active classification with comparison queries.
\newblock In {\em 2017 IEEE 58th Annual Symposium on Foundations of Computer
  Science (FOCS)}, pages 355--366. IEEE.

\bibitem[Kaufmann et~al., 2016]{kaufmann2016complexity}
Kaufmann, E., Capp{\'e}, O., and Garivier, A. (2016).
\newblock On the complexity of best-arm identification in multi-armed bandit
  models.
\newblock {\em The Journal of Machine Learning Research}, 17(1):1--42.

\bibitem[Locatelli et~al., 2016]{locatelli2016optimal}
Locatelli, A., Gutzeit, M., and Carpentier, A. (2016).
\newblock An optimal algorithm for the thresholding bandit problem.
\newblock In {\em Proceedings of the 33rd International Conference on Machine
  Learning-Volume 48}, pages 1690--1698. JMLR. org.

\bibitem[{Massart} and {N{\'e}d{\'e}lec}, 2007]{2007math......2683M}
{Massart}, P. and {N{\'e}d{\'e}lec}, {\'E}. (2007).
\newblock {Risk bounds for statistical learning}.
\newblock {\em arXiv Mathematics e-prints}, page math/0702683.

\bibitem[Mohajer et~al., 2017]{mohajer2017active}
Mohajer, S., Suh, C., and Elmahdy, A. (2017).
\newblock Active learning for top-k rank aggregation from noisy comparisons.
\newblock In {\em Proceedings of the 34th International Conference on Machine
  Learning-Volume 70}, pages 2488--2497. JMLR. org.

\bibitem[Mukherjee et~al., 2017]{mukherjee2017thresholding}
Mukherjee, S., Purushothama, N.~K., Sudarsanam, N., and Ravindran, B. (2017).
\newblock Thresholding bandits with augmented ucb.
\newblock In {\em Proceedings of the 26th International Joint Conference on
  Artificial Intelligence}, pages 2515--2521. AAAI Press.

\bibitem[Shah et~al., 2016a]{shah2016stochastically}
Shah, N., Balakrishnan, S., Guntuboyina, A., and Wainwright, M. (2016a).
\newblock Stochastically transitive models for pairwise comparisons:
  Statistical and computational issues.
\newblock In {\em International Conference on Machine Learning}, pages 11--20.

\bibitem[Shah et~al., 2016b]{shah2016estimation}
Shah, N.~B., Balakrishnan, S., Bradley, J., Parekh, A., Ramchandran, K., and
  Wainwright, M.~J. (2016b).
\newblock Estimation from pairwise comparisons: Sharp minimax bounds with
  topology dependence.
\newblock {\em The Journal of Machine Learning Research}, 17(1):2049--2095.

\bibitem[Sz{\"o}r{\'e}nyi et~al., 2015]{szorenyi2015online}
Sz{\"o}r{\'e}nyi, B., Busa-Fekete, R., Paul, A., and H{\"u}llermeier, E.
  (2015).
\newblock Online rank elicitation for plackett-luce: A dueling bandits
  approach.
\newblock In {\em Advances in Neural Information Processing Systems}, pages
  604--612.

\bibitem[Thurstone, 1927]{thurstone1927law}
Thurstone, L.~L. (1927).
\newblock A law of comparative judgment.
\newblock {\em Psychological review}, 34(4):273.

\bibitem[Tsybakov et~al., 2004]{tsybakov2004optimal}
Tsybakov, A.~B. et~al. (2004).
\newblock Optimal aggregation of classifiers in statistical learning.
\newblock {\em The Annals of Statistics}, 32(1):135--166.

\bibitem[Xu et~al., 2018]{xu2018nonparametric}
Xu, Y., Muthakana, H., Balakrishnan, S., Singh, A., and Dubrawski, A. (2018).
\newblock Nonparametric regression with comparisons: Escaping the curse of
  dimensionality with ordinal information.
\newblock In {\em International Conference on Machine Learning}, pages
  5469--5478.

\bibitem[Xu et~al., 2017]{xu2017noise}
Xu, Y., Zhang, H., Miller, K., Singh, A., and Dubrawski, A. (2017).
\newblock Noise-tolerant interactive learning using pairwise comparisons.
\newblock In {\em Advances in Neural Information Processing Systems}, pages
  2431--2440.

\bibitem[Yue et~al., 2012]{yue2012k}
Yue, Y., Broder, J., Kleinberg, R., and Joachims, T. (2012).
\newblock The k-armed dueling bandits problem.
\newblock {\em Journal of Computer and System Sciences}, 78(5):1538--1556.

\bibitem[Zhou et~al., 2014]{Chen:14PAC}
Zhou, Y., Chen, X., and Li, J. (2014).
\newblock Optimal {PAC} multiple arm identification with applications to
  crowdsourcing.
\newblock In {\em In Proceedings of the International Conference on Machine
  Learning (ICML)}.

\end{thebibliography}
\clearpage
\appendix
\onecolumn
\section{Additional Experiment Details}

\textbf{Method to initialize $\comperr_0$.} The method to find the initial $\comperr_0$ is stated in Algorithm \ref{algo:init_gamma}. We lower $\comperr_0$ iteratively until we find $\max \estprob{\bs}-\min \estprob{\bs}\geq 2\comperr_0$. This criteria is set so that we are likely to find separable arms in subsequent binary searches.

\begin{algorithm}[htb!]
	\caption{Initialize $\comperr_0$}
	\label{algo:init_gamma}
	\begin{algorithmic}[1]        
		\State $\comperr_0\leftarrow 0.1$
		\While{True}
		\While{$\exists \bs\in \workset, \ncomp{\bs}\leq \frac{1}{\comperr_0^2} \log\left(\frac{8|S|(\itercount+1)^2}{\tol}\right) $} \label{step:rank_start}
		\For{$\bs\in \workset$}
		\State Draw $\bs'\in [\narm]$ uniformly at random, and compare arm $\bs$ with arm ${\bs'}$
		\State If arm $\bs$ wins, $\nwin{\bs}\leftarrow \nwin{\bs}+1$
		\State $\ncomp{\bs}\leftarrow \ncomp{\bs}+1$
		\EndFor
		\EndWhile
		\State Compute $\estprob{\bs}\leftarrow \nwin{\bs}/\ncomp{\bs}$ for all $\bs\in \workset$ 
		\If{$\max \estprob{\bs}-\min \estprob{\bs}<2\comperr_0$}
		\State $\comperr_0\leftarrow \comperr_0/1.1$
		\Else 
		\State break
		\EndIf
		\EndWhile
		\Ensure{$\comperr_0$}
	\end{algorithmic}
\end{algorithm}


%
%
\begin{figure*}
	\centering
		\begin{subfigure}[b]{0.33\textwidth}
		\centering
		\includegraphics[width=\textwidth]{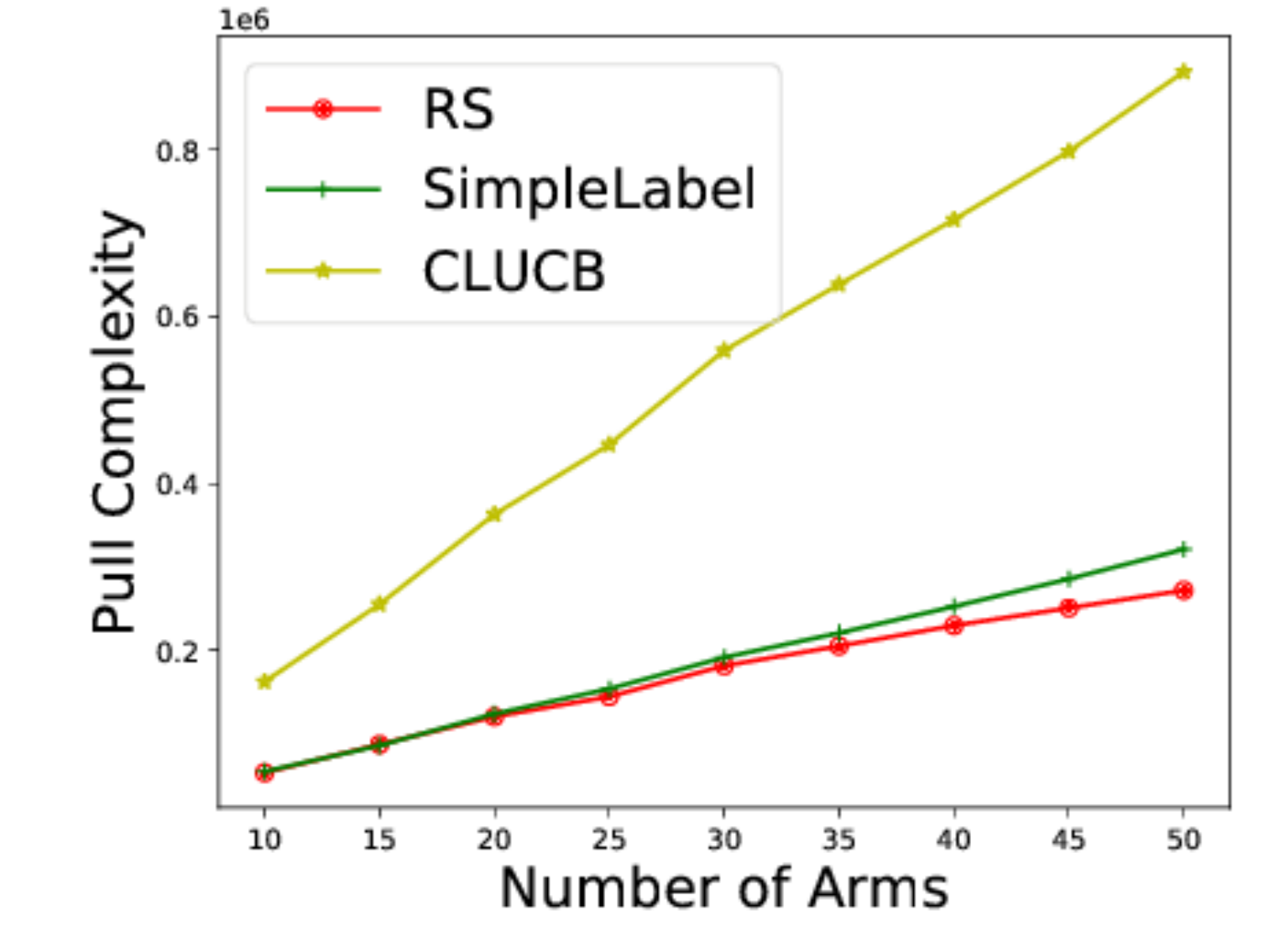}
		\caption{uniform\label{fig:uniform}}
	\end{subfigure}
\hfill
	\begin{subfigure}[b]{0.33\textwidth}
		\centering
		\includegraphics[width=\textwidth]{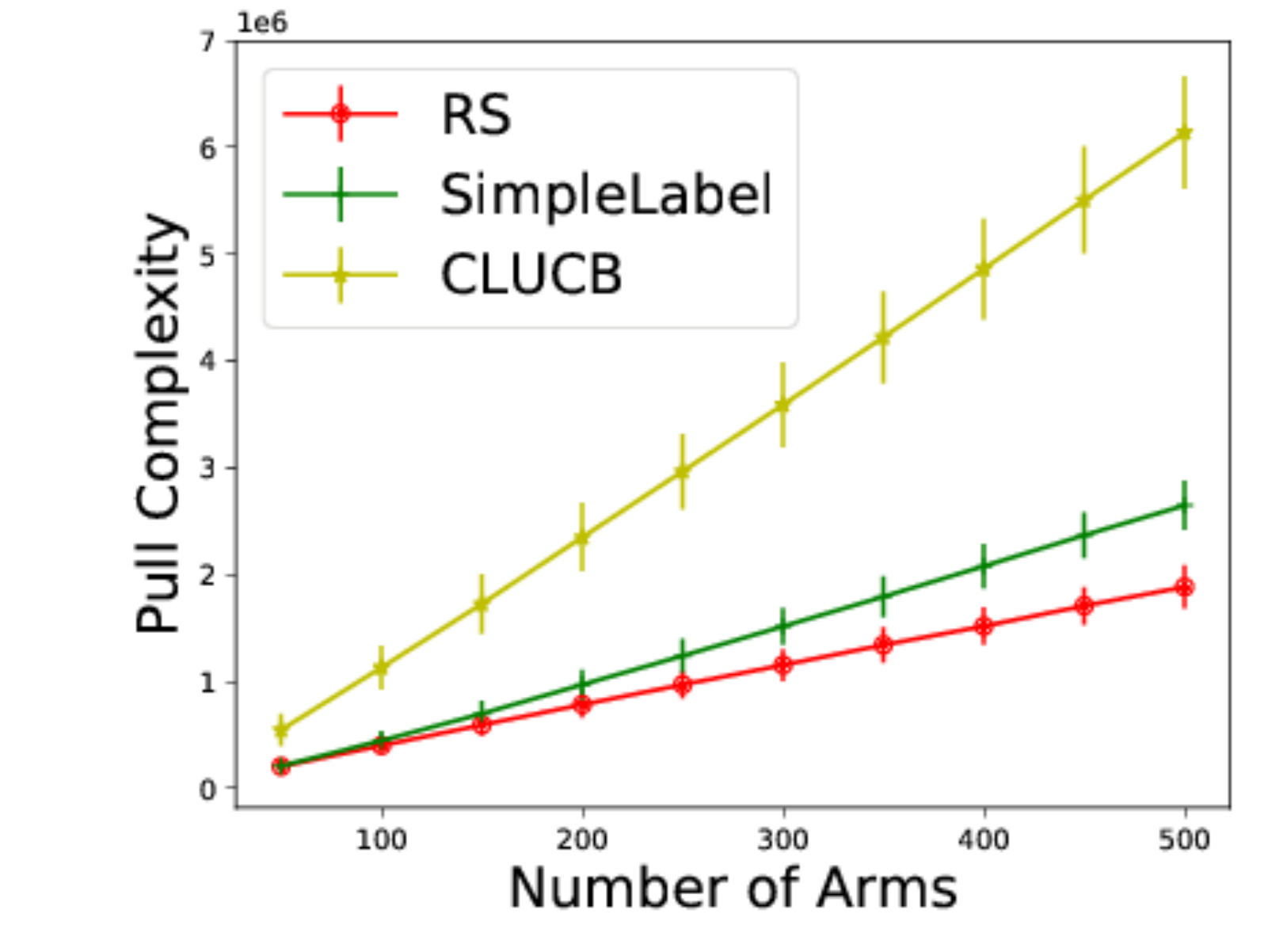}
		\caption{12groups}
	\end{subfigure}%
		\hfill
	\begin{subfigure}[b]{0.33\textwidth}
		\centering
		\includegraphics[width=\textwidth]{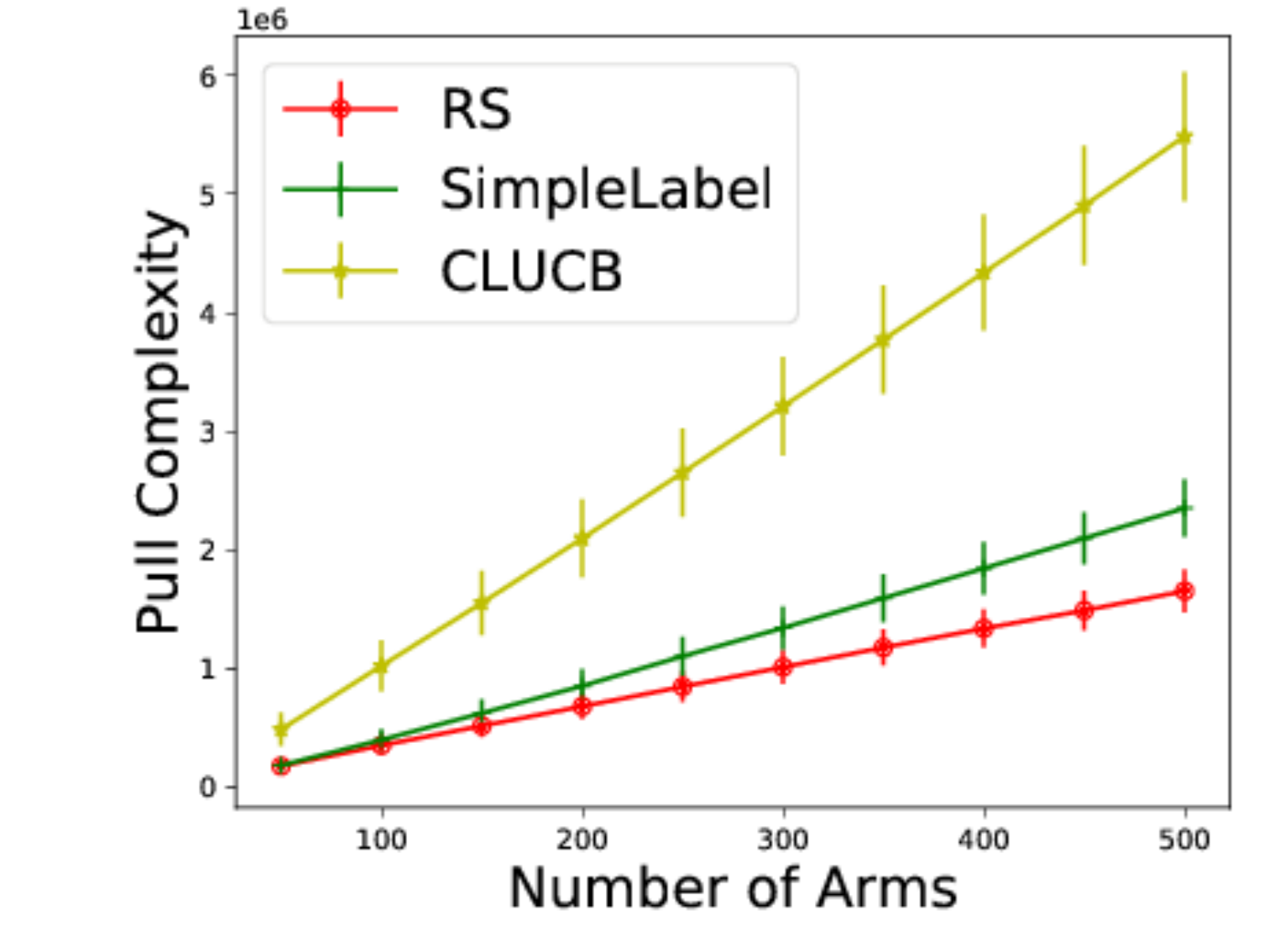}
		\caption{4groups}
	\end{subfigure}
	
	\caption{Empirical results comparing \algoname and other baselines under the 12groups and 4groups setting for pull complexity. Error bars represent standard deviation across 500 experiments.\label{fig:12groups}}
	\lsp
\end{figure*}
\textbf{Additional Synthetic Experiments.} 
In addition to the settings we consider in Section \ref{sec:expr}, we also test the uniform reward distribution:\\
\emph{Uniform}: The means are simply uniformly random in $[0,1]$.
Also to verify our observations on real data (see Section \ref{sec:expr_res}), we compare \algoname with the baselines for pull complexity for the two following setups:\\
\emph{12groups}: The means are uniformly randomly picked from $[1/13, 2/13,...,12/13]$. This simulates the reading difficulty distribution;\\
\emph{4groups}: The means are randomly picked by $\Pr[\mu_i=1/13]=\Pr[\mu_i=12/13]=5/12$ and $\Pr[\mu_i=6/13]=\Pr[\mu_i=7/13]=1/12$. This only keeps the arms close to the boundary and makes the other arms further from the boundary.\\
Results are depicted in Figure \ref{fig:12groups}. For uniform rewards(Figure \ref{fig:uniform}), \algoname achieves a slightly better performance than SimpleLabel, much better than CLUCB. This situation can hardly be improved by using comparisons, since identifying the labels of the hardest arms is almost as difficult as identifying the labels of all the arms. Still, RS can outperform baselines by a small margin since it pulls fewer arms.

For \emph{12groups} and \emph{4groups} (Figure \ref{fig:12groups}b,c) we obtain a similar performance gain as in the real data setting, suggesting that the arms close to the boundary are increasing the cost of \algoname. We note that the pulls of \algoname is necessary since there is no other way to identify the labels of arms with means $6/13$ and $7/13$.

\begin{figure*}[ht!]
	\centering
	\begin{subfigure}[b]{0.33\textwidth}
		\centering
		\includegraphics[width=\textwidth]{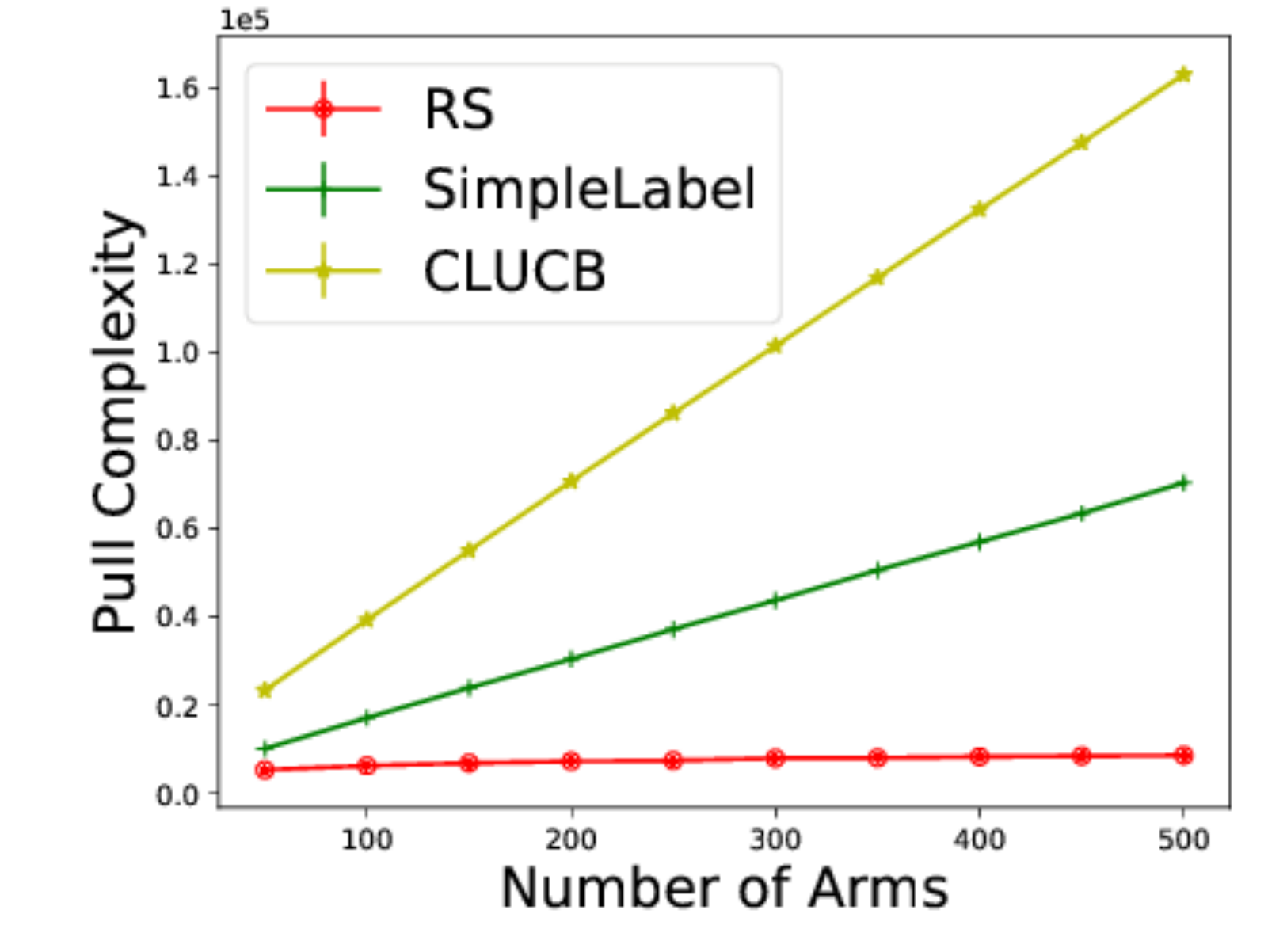}
		\caption{harmonic}
	\end{subfigure}%
	\begin{subfigure}[b]{0.33\textwidth}
		\centering
		\includegraphics[width=\textwidth]{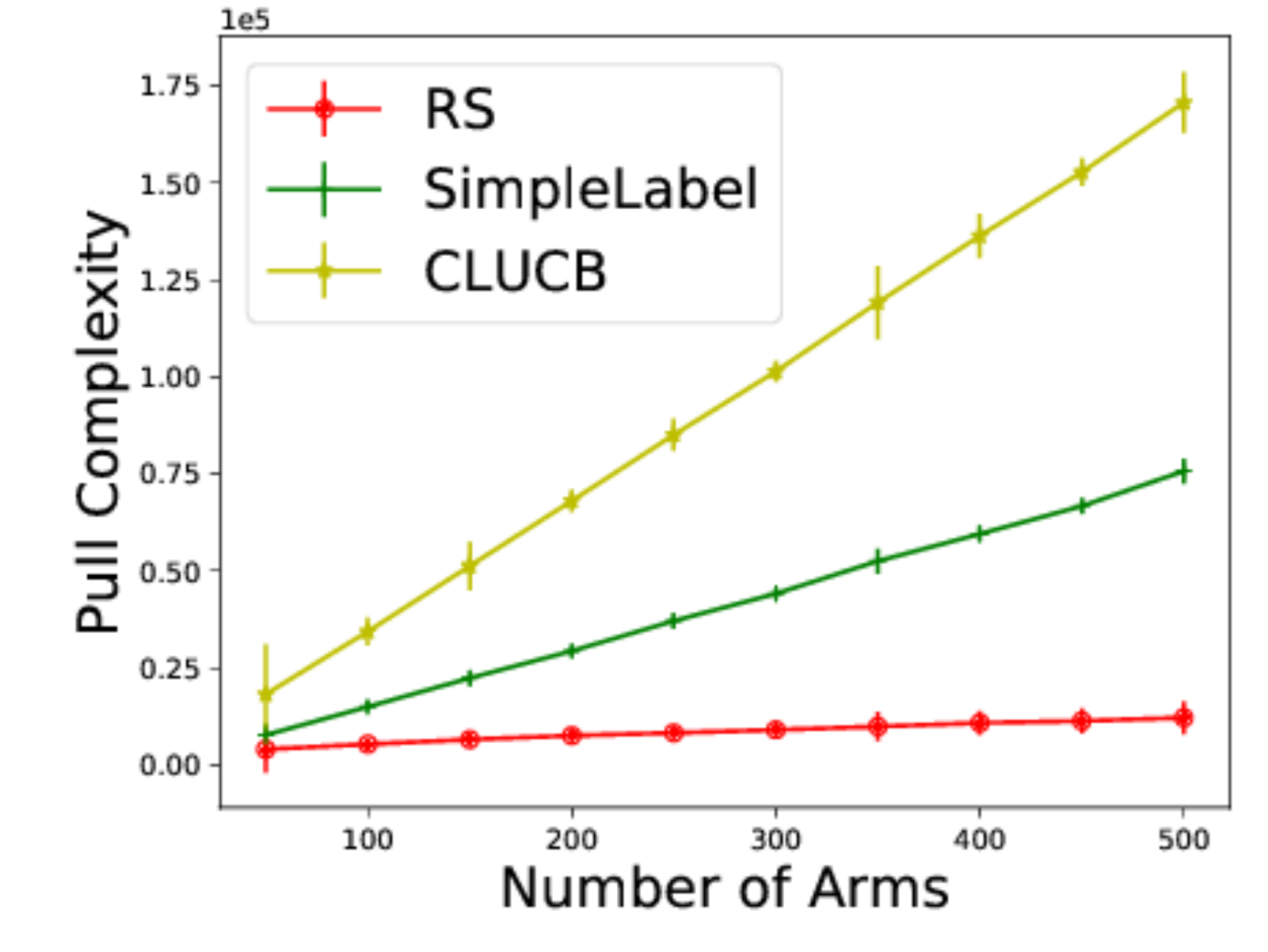}
		\caption{exponential}
	\end{subfigure}
	\begin{subfigure}[b]{0.33\textwidth}
		\centering
		\includegraphics[width=\textwidth]{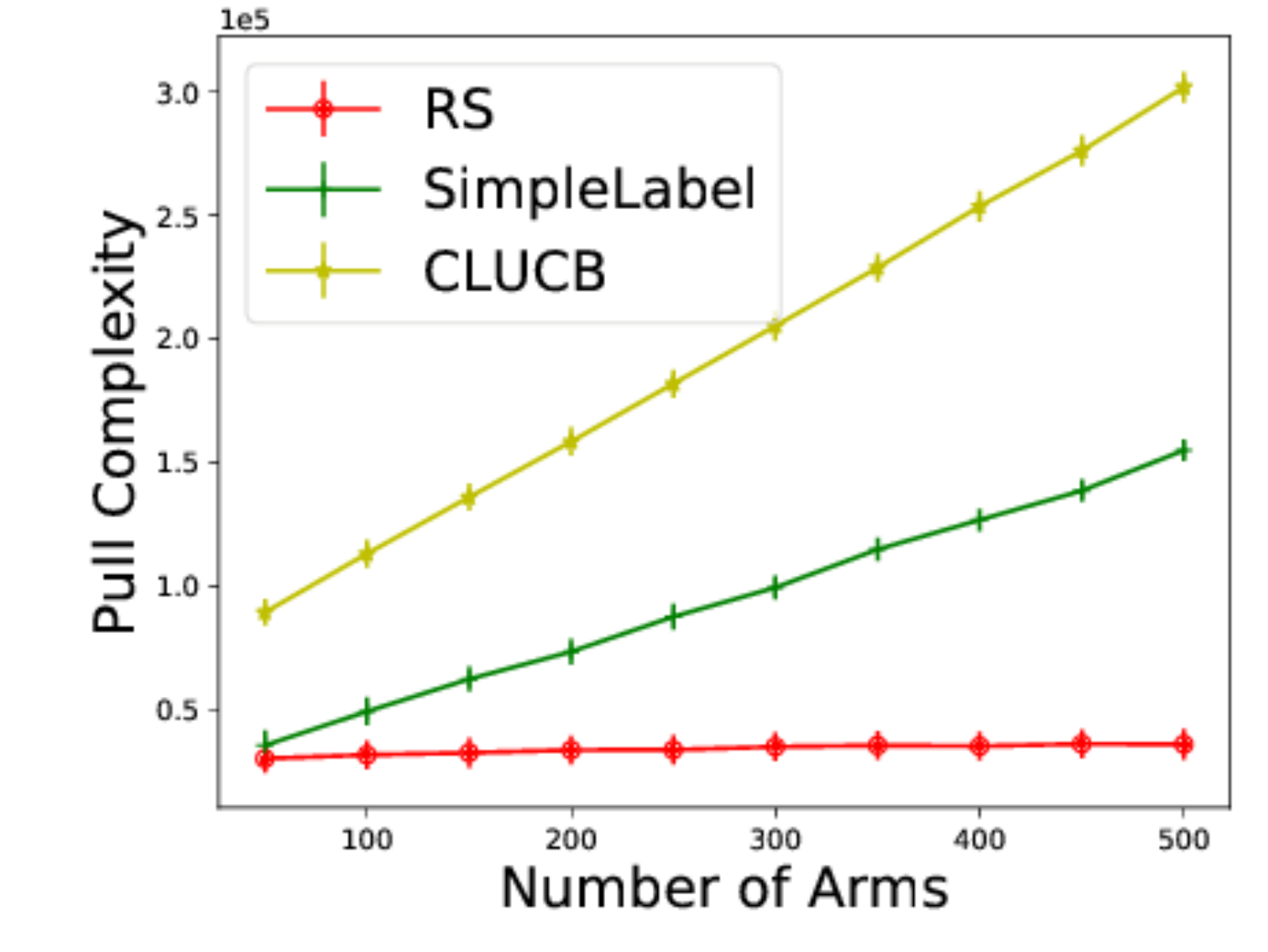}
		\caption{3 groups}
	\end{subfigure}
	
	\caption{Empirical results comparing \algoname and other baselines under BTL model for pull complexity. Error bars represent standard deviation across 500 experiments.\label{fig:expr_result_btl}}
	\lsp
\end{figure*}
\textbf{Results on BTL model.} We compare \algoname with the baselines under the same synthetic data but with the BTL model for comparisons. The results for pull complexity is in Figure \ref{fig:expr_result_btl} and duel complexity in Figure \ref{fig:expr_result_duelonly_btl}. The results are generally very similar to the linear link function case, but with a larger duel complexity. As in the linear link function case, \algoname exhibits a better performance in both pull and duel complexity than all the other baselines.

\begin{figure}[htbp!]
	\centering
	\begin{subfigure}[b]{0.35\textwidth}
		\centering
		\includegraphics[width=\textwidth]{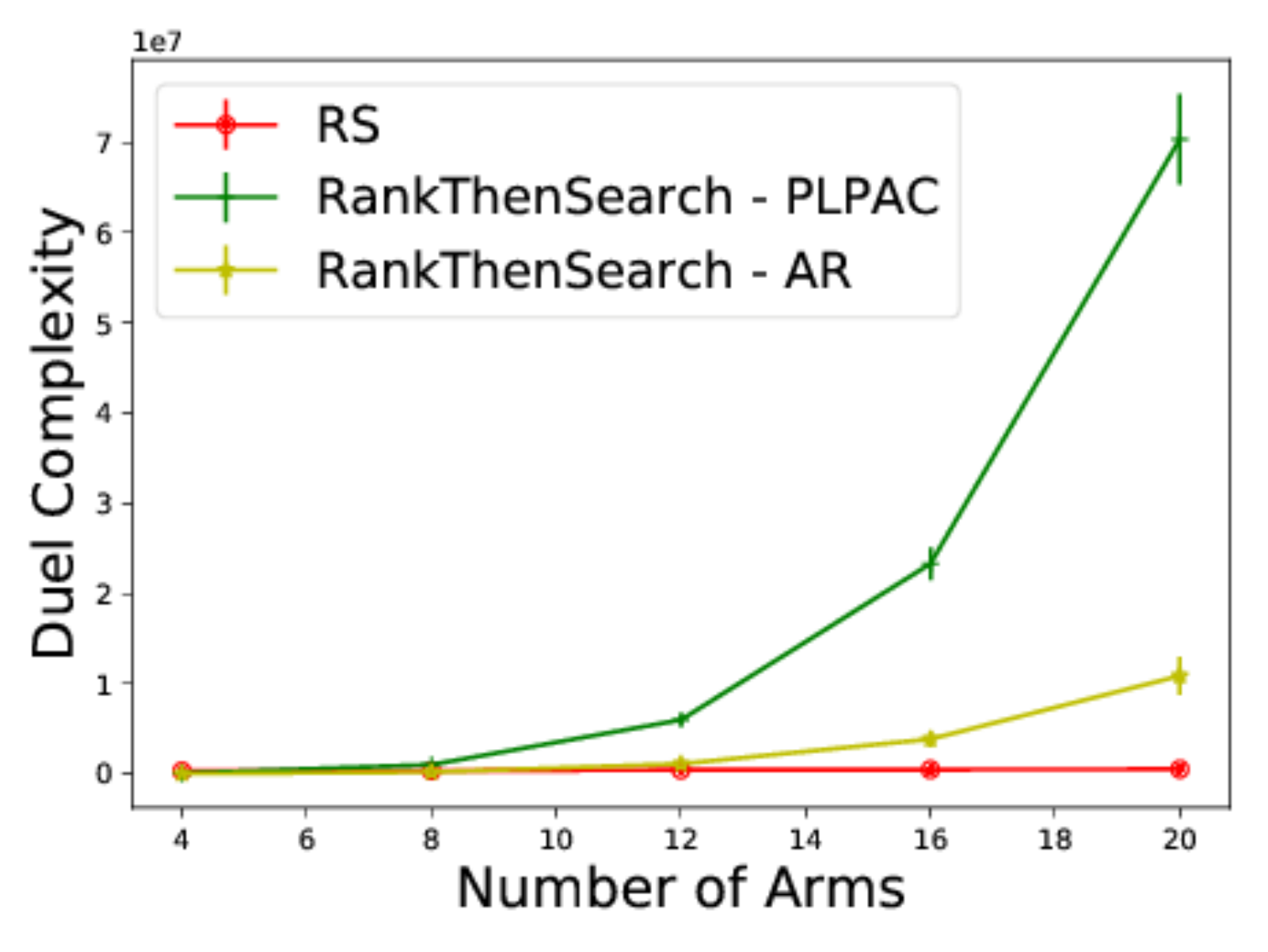}
		\caption{harmonic}
	\end{subfigure}%
	\begin{subfigure}[b]{0.35\textwidth}
		\centering
		\includegraphics[width=\textwidth]{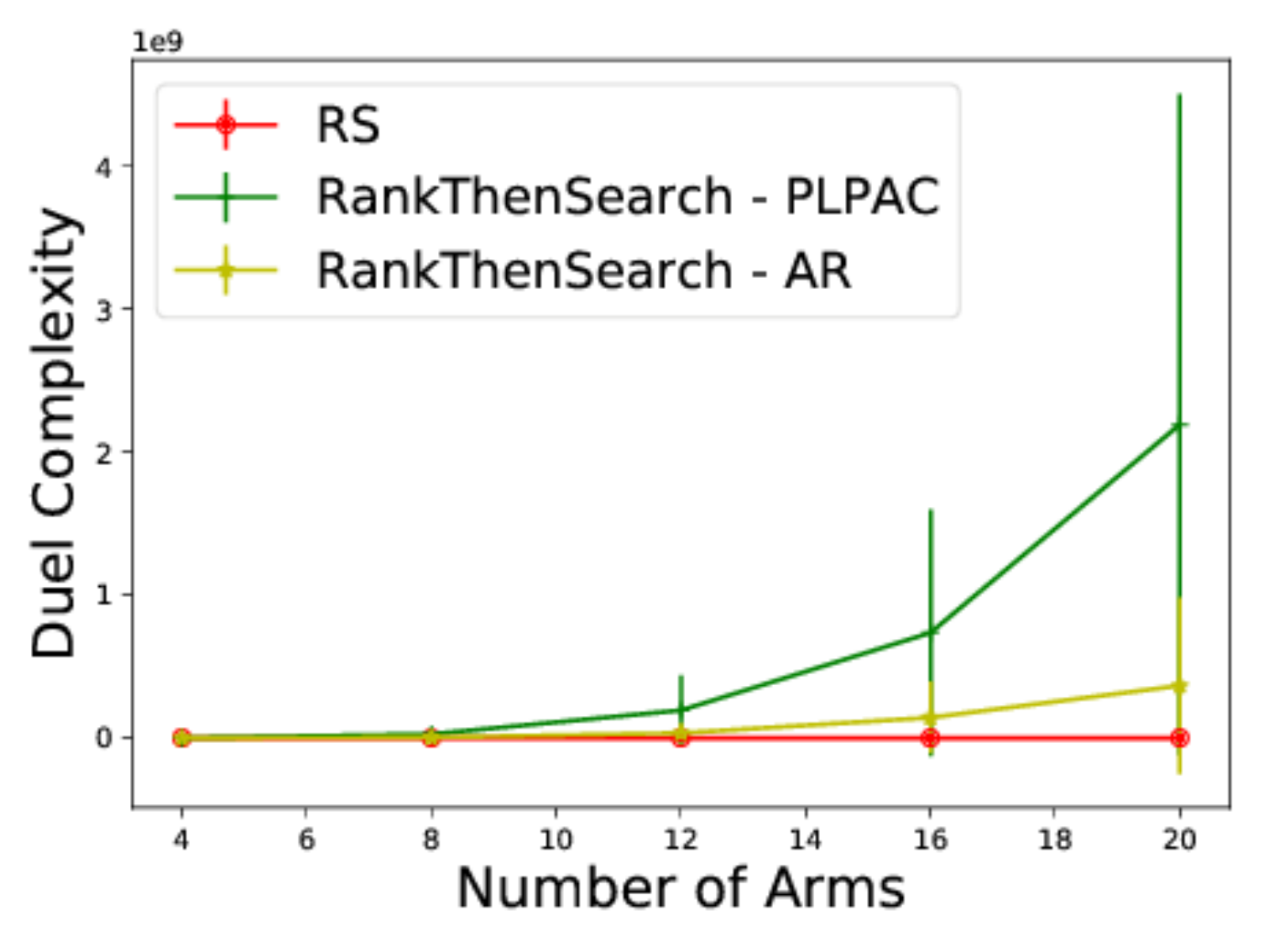}
		\caption{exponential}
	\end{subfigure}
	
	\caption{Empirical results comparing \algoname and RankThenSearch under BTL model for duel complexity. PLPAC is short for PLPAC-AMPR. \label{fig:expr_result_duelonly_btl}}
\end{figure}



\section{Proofs}

\subsection{Proof of Theorem \ref{thm:upper}}
First we show that with high probability our confidence interval in Algorithm \ref{algo:ranksearch} and \ref{algo:figure_out_label} bounds $\pbbeat{i}$ and $\sammean{i}$.
\begin{lemma}\label{lem:concentration}
	With probability $1-\tol$ the following holds:
	\begin{itemize}
		\item At step \ref{step:rank} in Algorithm \ref{algo:ranksearch} we have $|\pbbeat{\samiter}-\estprob{\samiter}|\leq \comperr_\itercount$ for all $\samiter\in S$ and all $t$;
		\item At step \ref{step:delta0} in Algorithm \ref{algo:figure_out_label} we have $|\sammean{\samiter}-\estmean{\samiter}|\leq \gamma$ for all arms $\samiter$ that are passed to Algorithm \ref{algo:figure_out_label}. 
	\end{itemize}
\end{lemma}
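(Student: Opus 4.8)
The plan is to prove the two bullets separately: each one reduces to a Hoeffding-type tail bound for a single arm at a single stage, then a union bound over arms and over the (deterministic) schedule of confidence radii, and finally a union bound over the two bullets, so that the total failure probability is at most $\tol$. The polynomial inflation factors $(\itercount+1)^2$ and $(t+1)^2$ built into the confidence parameters of the algorithm are exactly what make these union bounds summable.

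\textbf{Ranking step.} Fix an outer iteration $\itercount$ and an arm $\bs\in\workset$ at that iteration. Each comparison performed for $\bs$ in the inner refinement loop pits $\bs$ against an independently drawn uniform arm, so the win indicator is an i.i.d.\ Bernoulli draw with mean $\pbbeat{\bs}$ (see~\eqref{def:p}); hence $\estprob{\bs}=\nwin{\bs}/\ncomp{\bs}$ is the empirical mean of $\ncomp{\bs}$ such draws. When Line~\ref{step:rank} is reached, $\ncomp{\bs}\ge \comperr_\itercount^{-2}\log\bigl(8|\workset|(\itercount+1)^2/\tol\bigr)$, so Hoeffding's inequality gives $\Pr\bigl[|\estprob{\bs}-\pbbeat{\bs}|>\comperr_\itercount\bigr]\le 2\exp(-2\ncomp{\bs}\comperr_\itercount^2)\le 2\bigl(\tol/(8|\workset|(\itercount+1)^2)\bigr)^2$. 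Summing over the at most $|\workset|\le\narm$ arms in the working set and then over $\itercount\ge 0$, using $\sum_{\itercount\ge 0}(\itercount+1)^{-4}<\infty$, bounds the failure probability of the first bullet by $\tol/2$.

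\textbf{Labeling step.} Next fix any arm $\bs$ that is ever passed to \figurelabelname. This happens inside one invocation of \bsalgoname, which at outer iteration $\itercount$ is called with confidence $\tol/4(\itercount+1)^2$ (Line~\ref{step:binary_search}) and therefore hands \figurelabelname the confidence $\tol_1=\tol/(4(\itercount+1)^2\lceil\log_2|\workset|\rceil)$. Inside \figurelabelname, let $t$ denote its internal counter; after the $t$-th refinement $\estmean{\bs}$ averages $\nlabel{\bs}>2^{t}$ independent $\sgpara$-sub-Gaussian samples, and the radius is set (Line~\ref{step:delta0}) to $\gamma=\sgpara\sqrt{2\log(4(t+1)^2/\tol_1)/\nlabel{\bs}}$, chosen exactly so that the sub-Gaussian tail bound gives $\Pr[|\estmean{\bs}-\sammean{\bs}|>\gamma]\le 2\exp\bigl(-\nlabel{\bs}\gamma^2/(2\sgpara^2)\bigr)=\tol_1/(2(t+1)^2)$. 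Summing over $t\ge 0$ shows \figurelabelname errs on this arm with probability $<\tol_1$; a union bound over the at most $\lceil\log_2|\workset|\rceil$ calls made in one run of \bsalgoname costs at most $\tol/4(\itercount+1)^2$, and summing over the outer iterations $\itercount\ge 0$ (using $\sum_{\itercount\ge 0}(\itercount+1)^{-2}<\infty$) bounds the failure probability of the second bullet by $\tol/2$. A final union bound over the two bullets gives probability at least $1-\tol$.

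\textbf{Main obstacle.} The delicate point is that every quantity we union-bound over is data-dependent: the number of refinement rounds $t$ inside \figurelabelname is a stopping time, the working-set size $|\workset|$ shrinks across outer iterations, and the identity of the arms that ever reach \figurelabelname is random. The argument must therefore be an ``anytime'' union bound over \emph{all potential} values of $t$ and $\itercount$ rather than the realized ones, and the convergence of $\sum_t(t+1)^{-2}$ and $\sum_\itercount(\itercount+1)^{-2}$ is exactly what keeps this affordable; this is the only genuinely nontrivial part. A minor side point to dispatch along the way is that comparing $\bs$ against a uniformly random opponent is, up to an $O(1/\narm)$ self-comparison term absorbed into constants, unbiased for $\pbbeat{\bs}$, so Hoeffding applies with the correct mean.
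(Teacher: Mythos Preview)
Your proposal is correct and follows essentially the same route as the paper: Hoeffding for the Bernoulli win indicators in the ranking phase, the sub-Gaussian tail bound in \figurelabelname, and a union bound over arms, inner counters $t$, and outer iterations $\itercount$, with the $(t+1)^{-2}$ and $(\itercount+1)^{-2}$ factors making the sums converge to $\tol/2$ each. You are in fact slightly more careful than the paper in two places: you correctly note that $\exp(-2\ncomp{\bs}\comperr_\itercount^2)$ is bounded by the \emph{square} of $\tol/(8|\workset|(\itercount+1)^2)$ (the paper simply writes $\tol/4t^2$, which is looser but still sufficient), and you flag the $O(1/\narm)$ self-comparison bias in the Borda estimate, which the paper silently ignores.
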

\begin{proof}
	The lemma follows from standard concentration inequality. Using Hoeffding's inequality and a union bound we know that in each round of Algorithm \ref{algo:ranksearch} we have
	\[\Pr[\exists \samiter, |\pbbeat{\samiter}-\estprob{\samiter}|> \comperr_\itercount]\leq |S|\exp(-2\ncomp{\samiter}\cdot \comperr_\itercount^2)\leq \frac{\tol}{4t^2}.\]
	Sum it up we have $|\pbbeat{\samiter}-\estprob{\samiter}|\leq \comperr_\itercount$ holds for all $\samiter\in \workset$ and all rounds $t$ with probability at most $\tol/2$.
	
	Similarly, from Hoeffding's inequality for sub-Gaussian random variables and a union bound we have for any run of \figurelabelname,
	\begin{align*}
	\Pr[\exists t, |\sammean{\samiter}-\estmean{\samiter}|> \gamma]&\leq \sum_{t=0}^\infty \exp(-\frac{\gamma^2}{2R^2})\\
	&\leq \sum_{t=0}^\infty \frac{\tol_1}{4(t+1)^2}\leq \tol_1.
	\end{align*} 
	Now sum the probability over all runs of \figurelabelname we have
	\[\Pr[\text{Every \figurelabelname is correct}]=\sum_{t=0}^\infty \frac{\tol}{4(t+1)^2}\log |S|\cdot \frac{1}{\log |S|}\le \tol/2. \]
	The lemma follows from another union bound.
\end{proof}
We now assume the event in Lemma \ref{lem:concentration} happens. Now we can show that we never make a mistake when we estimate labels in Algorithm~\ref{algo:figure_out_label} using direct pulls. Firstly, upon termination of \figurelabelname we have $|\estmean{\samiter}-\thres|> \gamma$. Not losing generality, suppose we have $\hat{\dlabel}_{\samiter}=1$ as the output. Then we have $\estmean{\samiter}-\thres>\gamma$, and thus $\sammean{\samiter}>\thres$, so $\samiter\in \posset$. Similarly we do not make a mistake when $\hat{\dlabel}_{\samiter}=0$. 

To show the correctness when we infer labels in step \ref{step:infer_pos} and \ref{step:infer_neg} in Algorithm \ref{algo:ranksearch}, we first need the following lemma for binary search in an arbitrary noisy sequence:
\begin{lemma}\label{lemma:bs_prop}
	\bsalgoname always returns within $\lceil \log (|\workset|)+1\rceil$ iterations, and the first output $\bsvar$ satisfies i) $\hat{\dlabel}_{\bs_{\bsvar+1}}=1$ if $\bsvar<|S|$; and ii) $\hat{\dlabel}_{\bs_{\bsvar}}=0$ if $\bsvar>0$.
\end{lemma}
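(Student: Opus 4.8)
The plan is to analyze the \bsalgoname routine (Algorithm~\ref{algo:binary_search}) as an ordinary binary search on the index interval $[\bsvar_{\min},\bsvar_{\max}]$, ignoring whether the underlying sequence is correctly ranked. First I would argue termination: each iteration of the while-loop either sets $\bsvar_{\max}\leftarrow\bsvar-1$ or $\bsvar_{\min}\leftarrow\bsvar$ where $\bsvar=\lceil(\bsvar_{\min}+\bsvar_{\max})/2\rceil$, and in both cases the length $\bsvar_{\max}-\bsvar_{\min}$ of the live interval strictly decreases and is at least halved (the ceiling is chosen precisely so that $\bsvar>\bsvar_{\min}$ whenever $\bsvar_{\min}<\bsvar_{\max}$, so the ``move right'' branch makes genuine progress). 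Starting from length $|\workset|$, after at most $\lceil\log_2|\workset|\rceil+1$ halvings the interval length drops below $1$, i.e.\ $\bsvar_{\min}=\bsvar_{\max}$ and the loop exits; that gives the iteration bound.

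Next I would establish the two invariants on the returned $\bsvar=\bsvar_{\min}$. The key is to maintain, as a loop invariant, that every time $\bsvar_{\max}$ is updated to $\bsvar-1$ we have recorded $\hat\dlabel_{\bs_\bsvar}=1$ for the current $\bsvar$, and every time $\bsvar_{\min}$ is updated to $\bsvar$ we have $\hat\dlabel_{\bs_\bsvar}=0$. More precisely, I would prove by induction on the loop iterations the invariant: ``if $\bsvar_{\min}>0$ then $\hat\dlabel_{\bs_{\bsvar_{\min}}}=0$, and if $\bsvar_{\max}<|\workset|$ then $\hat\dlabel_{\bs_{\bsvar_{\max}+1}}=1$.'' Initially $\bsvar_{\min}=0$ and $\bsvar_{\max}=|\workset|$, so both hypotheses are vacuous. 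For the inductive step, in the branch $\hat\dlabel_{\bs_\bsvar}=1$ we set $\bsvar_{\max}\leftarrow\bsvar-1$, so $\bsvar_{\max}+1=\bsvar$ and indeed $\hat\dlabel_{\bs_{\bsvar_{\max}+1}}=\hat\dlabel_{\bs_\bsvar}=1$; the $\bsvar_{\min}$ part of the invariant is untouched and still holds. Symmetrically, in the branch $\hat\dlabel_{\bs_\bsvar}=0$ we set $\bsvar_{\min}\leftarrow\bsvar$, and since $\bsvar>\bsvar_{\min}^{\text{old}}\ge 0$ we have $\bsvar_{\min}>0$ with $\hat\dlabel_{\bs_{\bsvar_{\min}}}=\hat\dlabel_{\bs_\bsvar}=0$; the $\bsvar_{\max}$ part is preserved. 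At termination $\bsvar_{\min}=\bsvar_{\max}=\bsvar$, so the invariant immediately yields: if $\bsvar>0$ then $\hat\dlabel_{\bs_\bsvar}=0$, and if $\bsvar<|\workset|$ then $\hat\dlabel_{\bs_{\bsvar+1}}=1$, which are exactly claims (i) and (ii).

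The only subtlety — and the step I expect to require the most care — is verifying the arithmetic of the index updates: namely that $\bsvar=\lceil(\bsvar_{\min}+\bsvar_{\max})/2\rceil$ always satisfies $\bsvar_{\min}<\bsvar\le\bsvar_{\max}$ whenever $\bsvar_{\min}<\bsvar_{\max}$, so that both branches shrink the interval and neither update causes $\bsvar_{\min}$ to exceed $\bsvar_{\max}$ or the indices to leave the valid range $\{1,\dots,|\workset|\}$. This is a short case check: if $\bsvar_{\max}\ge\bsvar_{\min}+1$ then $(\bsvar_{\min}+\bsvar_{\max})/2\ge\bsvar_{\min}+\tfrac12$ so its ceiling is $\ge\bsvar_{\min}+1$, and it is clearly $\le\bsvar_{\max}$; hence $\bsvar-1\ge\bsvar_{\min}$ and $\bsvar\le\bsvar_{\max}$, so both new intervals are valid and strictly shorter. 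Note that this argument uses \emph{nothing} about the correctness of the ranking of $\workset$ or about the accuracy of \figurelabelname on individual arms — those concerns are handled separately (via Lemma~\ref{lem:concentration}); here we only use the deterministic control flow of Algorithm~\ref{algo:binary_search}, which is why the lemma can be stated for ``an arbitrary noisy sequence.''
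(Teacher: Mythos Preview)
Your proposal is correct and follows essentially the same approach as the paper: both prove termination via interval halving and then maintain the loop invariant that $\hat\dlabel_{\bs_{\bsvar_{\min}}}=0$ and $\hat\dlabel_{\bs_{\bsvar_{\max}+1}}=1$, reading off claims (i) and (ii) once $\bsvar_{\min}=\bsvar_{\max}$. The only cosmetic difference is that the paper handles the base case by introducing sentinel labels $\hat\dlabel_{\bs_0}=0$, $\hat\dlabel_{\bs_{|\workset|+1}}=1$, whereas you phrase the invariant conditionally (``if $\bsvar_{\min}>0$ \ldots''); your additional verification that $\bsvar_{\min}<\bsvar\le\bsvar_{\max}$ is a welcome detail the paper leaves implicit.
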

\begin{proof}
	Firstly, Algorithm \ref{algo:binary_search} always terminates, because $\bsvar=\lceil(\bsvar_{\min}+\bsvar_{\max})/2\rceil$ satisfies $\bsvar_{\max}-\bsvar_{\min}\geq 2\max\{\bsvar-\bsvar_{\min}, \bsvar_{\max}-\bsvar \}$. For simplicity, define imaginary labels $\hat{\dlabel}_0=0,\hat{\dlabel}_{|S|+1}=1$. We prove by induction that we always have $\hat{\dlabel}_{\bs_{\bsvar_{\min}}}=0$ and $\hat{\dlabel}_{\bs_{\bsvar_{\max}+1}}=1$. This is true for the first iteration; for subsequent iterations, if we move to the left (Line \ref{step:moveleft}) we have $\hat{\dlabel}_{\bs_{\bsvar}}=\hat{\dlabel}_{\bs_{\bsvar_{\max}+1}}=1$; if we move the right (Line \ref{step:moveright}) we have $\hat{\dlabel}_{\bs_{\bsvar}}=\hat{\dlabel}_{\bs_{\bsvar_{\min}}}=0$. Therefore the claim holds. Note that upon termination we must have $\bsvar_{\max}=\bsvar_{\min}$. The lemma then follows from the claim. 
\end{proof}
Now if we let $\hat{\dlabel}_{\bs}=1$ in step \ref{step:infer_pos} in Algorithm \ref{algo:ranksearch}, we have $\estprob{\bs}-\estprob{{\bs_{\bsvar+1}}}\geq 2\comperr_{\itercount}$, and therefore $\pbbeat{\bs}>\pbbeat{{\bs_{\bsvar+1}}}$. Since, we have $\hat{\dlabel}_{\bs_{\bsvar+1}}=1$ from Lemma \ref{lemma:bs_prop} and its label is estimated correctly by Algorithm \ref{algo:figure_out_label}, ${\dlabel}_{\bs_{\bsvar+1}}=1$ and thus ${\bs_{\bsvar+1}}\in \posset$.
Since ${\bs_{\bsvar+1}}\in \posset$, $\pbbeat{\bs_{\bsvar+1}} \geq \pbbeat{j}$ for all $j\in \negset$ and same holds for $\pbbeat{\bs}>\pbbeat{{\bs_{\bsvar+1}}}$ meaning ${\bs}\in \posset$. Similarly we do not make a mistake on step \ref{step:infer_neg}.

Now we consider the number of duels taken to infer when any arm $\samiter=\armset\setminus \{\upthres,\lowthres\}$ is in $\overline{\workset}$ or $\underline{\workset}$ and hence is eliminated from further duels. Not losing generality, suppose $\samiter\in \posset$, and thus $\sammean{\samiter}> \thres$. We show that the arm ${\samiter}$ is eliminated from further duels when we have $4\comperr_\itercount< \diffcomp_{\samiter}$. Suppose we have $i \not\in \overline{\workset}$ i.e. $\estprob{\bs_{\bsvar+1}}\geq \estprob{\samiter}-2\comperr_\itercount$ at the end of the binary search in round $\itercount$. Let $j=\argmax_{j\in \posset} \min\{\pbbeat{j}-\pbbeat{\lowthres}, \pbbeat{\samiter}-\pbbeat{j} \}$ be the maximizer to obtain $\diffcomp_\samiter$.

By Lemma \ref{lem:concentration} and definition of $\diffcomp_\samiter$ we have \[\estprob{j}\leq \pbbeat{j}+\comperr_{\itercount}\leq \pbbeat{\samiter}-\diffcomp_\samiter+\comperr_{\itercount}< \pbbeat{\samiter}-3\comperr_{\itercount}\leq \estprob{\samiter}-2\comperr_{\itercount},\] 
so $\estprob{j}<\estprob{\samiter}-2\comperr_{\itercount}\leq \estprob{\bs_{\bsvar+1}}$. So arm $j$ is ranked before arm ${\bs_{\bsvar+1}}$; and since $\hat{\dlabel}_{\bs_{\bsvar}}=0$ by Lemma~\ref{lemma:bs_prop}, we have ${\bs_{\bsvar}}\not\in \posset$ since its label is estimated correctly by Algorithm \ref{algo:figure_out_label}, and therefore arm $j$ is ranked no later than arm ${\bs_{\bsvar}}$, thus $\estprob{j}\leq \estprob{\bs_{\bsvar}}$. However, from definitions of $\diffcomp_\samiter$ and arm $\lowthres$, we have
\[\pbbeat{j}\geq \pbbeat{\lowthres}+\diffcomp_\samiter\geq \pbbeat{\bs_{\bsvar}}+4\comperr_\itercount. \]
And therefore by Lemma \ref{lem:concentration} we have $\estprob{j}\geq \pbbeat{j}-\comperr_{\itercount}\geq \pbbeat{\bs_{\bsvar}}+3\comperr_{\itercount} \geq  \estprob{\bs_{\bsvar}}+2\comperr_\itercount$, which makes a contradiction. Therefore we will have $\estprob{\bs_{\bsvar+1}}< \estprob{\samiter}-2\comperr_\itercount$ i.e. arm $\samiter \in \overline{S}$, and arm ${\samiter}$ will be excluded from $\workset$ in iteration $\itercount$. In a similar way we can argue that for $\samiter\in \negset$, it is excluded from $\workset$ when $\diffcomp_{\samiter}>4\comperr_t$.

Therefore we would need 
$\frac{\log(8|S|t^2/\tol)}{(\diffcomp_{\samiter}/4)^2}$ duels to eliminate arm ${\bs}$ from further duels. Sum this over all arms $\samiter$ and use the fact that $t\leq \log(1\comperr^*)$, we get the number of duels is $O(\compcpl\log(\frac{K\log(1/\comperr^*)}{\tol}))$ to identify all arms except $\{\lowthres,\upthres\}$. When every arm $i\in \armset\setminus \{\upthres,\lowthres\}$ has been given a label, $\upthres,\lowthres$ will be given a label during binary search.

Now we bound the number of direct pulls. We figure out the label of arm ${\samiter}$ when $2\gamma\geq |\sammean{\samiter}-\thres|$ in Algorithm \ref{algo:figure_out_label}. Therefore for each sample we need $2\sqrt{R^2\frac{2\log(2/\tol_0)}{T}}\leq |\sammean{\samiter}-\thres|$ pulls; note that we only require pulls during binary search. Each binary search runs Algorithm \ref{algo:figure_out_label} for at most $\log \narm$ times, and we do $\log(1/\comperr^*)$ times of binary search. Combining these terms we get the number of pulls.


\subsection{Proof of Theorem \ref{thm:lower}}

Our proof borrows ideas from \cite{heckel2016active} but adapting to the dueling-choice case.
Not losing generality, suppose $\samiterint\in \posset$; the proof for $\samiterint\in\negset$ is similar.
We first use a lemma from bandit literature \citep{kaufmann2016complexity} that links KL divergence with error probability. Let $\nu=\{\nu_j\}_{j=1}^m$ be a collection of $m$ probability distributions supported on $\mathbb{R}$. Consider an algorithm $\algo$ that selects an index $i_t\in [m]$ and receives an independent draw $X$ from $\nu_i$. $i_t$ only depends on its past observations; i.e., $i_t$ is $\mathcal{F}_{t-1}$ measurable, where $\mathcal{F}_t$ is the $\sigma$-algebra generated by $i_1,X_{1},...,i_t,X_{t}$. Let $\chi$ be a stopping rule of $\algo$ that determines the termination of $\algo$. We assume that $\chi$ is measurable w.r.t $\mathcal{F}_t$ and $\Pr[\chi<\infty]=1$. Let $Q_i(\chi)$ be the number of times that $\nu_i$ is selected by $\algo$ until termination. For any $p,q\in (0,1)$, let $d(p,q)=p\log \frac{p}{q}+(1-p)\log\frac{1-p}{1-q}$ be the KL divergence between two Bernoulli distributions with parameter $p,q$. We use the following lemma:
\begin{lemma}[\citep{kaufmann2016complexity}, Lemma 1]\label{lem:mab_lower}
	Let $\nu=\{\nu_j\}_{j=1}^m, \nu'=\{\nu'_j\}_{j=1}^m$ be two collections of $m$ probability distributions on $\mathbb{R}$. For any event $\event\in \mathcal{F}_{\chi}$ with $\Pr_{\nu}[\event]\in (0,1)$ we have
	\begin{equation}
	\sum_{i=1}^m \E_{\nu}[Q_i(\chi)]KL(\nu_i,\nu_i')\geq d(\Pr_{\nu}[\event],\Pr_{\nu'}[\event]).
	\end{equation}
\end{lemma}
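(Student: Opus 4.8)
The statement to prove is a change-of-measure inequality comparing two bandit models $\nu$ and $\nu'$. My plan is the classical three-step argument: (i) express the expected log-likelihood ratio of the observed trajectory as the claimed weighted sum of per-arm KL divergences via a Wald/optional-stopping identity; (ii) identify this expected log-likelihood ratio with the KL divergence between the laws of the full stopped trajectory under the two models; and (iii) contract that trajectory-level divergence down to the binary divergence $d(\Pr_\nu[\event],\Pr_{\nu'}[\event])$ using the data-processing inequality applied to the indicator $\mathbf{1}_\event$.

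First I would assume without loss of generality that $KL(\nu_i,\nu_i')<\infty$ for every $i$ (otherwise the left-hand side is infinite and the bound is trivial), which in particular gives $\nu_i\ll\nu_i'$ so that the relevant likelihood ratios are well defined. Define the per-step log-likelihood ratio $\ell_t=\log\frac{d\nu_{i_t}}{d\nu'_{i_t}}(X_t)$ and the running sum $L_\tau=\sum_{t=1}^\tau \ell_t$. Because the index $i_t$ is $\mathcal{F}_{t-1}$-measurable, conditioning on $\mathcal{F}_{t-1}$ freezes the choice $i_t$, and under $\nu$ we get $\E_\nu[\ell_t\mid \mathcal{F}_{t-1}]=KL(\nu_{i_t},\nu'_{i_t})$. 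Hence $M_\tau=L_\tau-\sum_{t=1}^\tau KL(\nu_{i_t},\nu'_{i_t})$ is an $(\mathcal{F}_\tau)$-martingale under $\nu$. Applying optional stopping at the almost-surely-finite time $\chi$ and regrouping the time-indexed sum into an arm-indexed sum (arm $i$ contributes exactly $Q_i(\chi)$ increments, each equal to $KL(\nu_i,\nu_i')$) yields
\[
\E_\nu[L_\chi]=\E_\nu\!\left[\sum_{t=1}^\chi KL(\nu_{i_t},\nu'_{i_t})\right]=\sum_{i=1}^m \E_\nu[Q_i(\chi)]\,KL(\nu_i,\nu_i').
\]

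Next I would reinterpret $\E_\nu[L_\chi]$ as a trajectory-level KL divergence. Let $\mathbb{P}_\nu$ and $\mathbb{P}_{\nu'}$ denote the laws of the stopped observation sequence restricted to $\mathcal{F}_\chi$. Since $i_t$ is predictable, these laws factorize and $L_\chi$ is precisely the log-Radon--Nikodym derivative $\log\frac{d\mathbb{P}_\nu}{d\mathbb{P}_{\nu'}}$ on $\mathcal{F}_\chi$, so $\E_\nu[L_\chi]=KL(\mathbb{P}_\nu,\mathbb{P}_{\nu'})$. Finally, because $\event\in\mathcal{F}_\chi$, the map sending the trajectory to $\mathbf{1}_\event\in\{0,1\}$ is a measurable coarsening, and the data-processing inequality for KL divergence gives
\[
KL(\mathbb{P}_\nu,\mathbb{P}_{\nu'})\geq KL\big(\mathrm{Ber}(\Pr_\nu[\event]),\,\mathrm{Ber}(\Pr_{\nu'}[\event])\big)=d(\Pr_\nu[\event],\Pr_{\nu'}[\event]).
\]
Chaining the two displays proves the lemma.

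The main obstacle is the rigorous treatment of the random stopping time $\chi$: I must justify the optional-stopping/Wald identity in step (i) when $\chi$ is only assumed almost surely finite rather than bounded or integrable. This typically requires a truncation at $\chi\wedge n$ followed by monotone convergence, which is clean here because the compensator $\sum_t KL(\nu_{i_t},\nu'_{i_t})$ has nonnegative increments, together with an absolute-continuity and measurability check ensuring $L_\chi$ is a bona fide log-likelihood ratio on $\mathcal{F}_\chi$. Once step (i) is secured, the identification in (ii) and the data-processing contraction in (iii) are standard.
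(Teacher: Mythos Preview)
The paper does not actually prove this lemma; it is quoted verbatim as Lemma~1 of \cite{kaufmann2016complexity} and used as a black box in the proof of Theorem~\ref{thm:lower}. Your proposal reproduces the standard Kaufmann--Capp\'e--Garivier argument (Wald/optional-stopping identity for the log-likelihood ratio, identification with the trajectory-level KL, then data-processing contraction to the binary divergence), which is exactly the proof in the cited reference; it is correct and nothing further is needed.
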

Now, define the event $\event$ to be the event that $\algo$ succeeds under $\prefmat$ and $\meanvec$, i.e., $\event\equiv \left\{\posset=\estposset, \negset=\estnegset \right\}$. Under the relation $\prefmat_{\samiter\samiterp}=1-\prefmat_{\samiterp\samiter}$ the comparison is uniquely defined by the probabilities $\{\prefmat_{\samiter\samiterp}, 1\le \samiter<\samiterp\leq \narm \}$; and pull is uniquely defined by the mean vector $\meanvec$. For any two arms $\samiter, \samiterp$, let $\dueltime{\samiter\samiterp}(\chi)$ be the number of times that arms $\samiter$ and $\samiterp$ duel before stopping. Therefore for two problem settings $(\prefmat,\meanvec)$ and $(\prefmat',\meanvec')$, by Lemma \ref{lem:mab_lower} we have
\begin{equation}
\sum_{\samiter=1}^\narm \sum_{\samiterp=\samiter+1}^\narm \E_{\prefmat,\meanvec}[\dueltime{\samiter\samiterp}]d(\prefmat_{\samiter\samiterp},\prefmat'_{\samiter\samiterp})+\frac{1}{2R^2}\sum_{\samiter=1}^{\narm}(\sammean{\samiter}-\sammean{\samiter}')^2\geq d(\Pr_{\prefmat,\meanvec}[\event],\Pr_{\prefmat',\meanvec'}[\event]).\label{eqn:dpr}
\end{equation}
The second term in (\ref{eqn:dpr}) follows from the KL divergence between Gaussian variables.
We now construct another feasible profile $(\prefmat',\meanvec')$ and that $\sammean{\samiterint}<\thres$ and that $\pbbeat{\samiterint}'<\pbbeat{\samiterp}'$ for any $\samiterp\in \posset$ according to $\prefmat'$. Therefore in this case $\samiterint\not\in \posset(\prefmat',\meanvec')$, where $\posset(\prefmat',\meanvec')$ is the set of arms with reward larger than $\thres$ under $\prefmat',\meanvec'$. Since $\algo$ succeeds with probability $1-\tol$ for any problem setting, we have $\Pr_{\prefmat,\meanvec}[\event]\geq 1-\tol$ and $\Pr_{\prefmat',\meanvec'}[\event]\leq \tol$. Therefore
\[d(\Pr_{\prefmat,\meanvec}[\event],\Pr_{\prefmat',\meanvec'}[\event])\geq d(\tol,1-\tol)\geq \log\frac{1}{2\tol}, \]
which holds for $\tol\leq 0.15$. 

We now specify $\prefmat',\meanvec'$. Let
\[\prefmat'_{\samiter\samiterp}=\begin{cases}
\prefmat_{\samiterint\samiterp}-(\pbbeat{\samiterint}-\pbbeat{\upthres}), & \text{if } \samiter=\samiterint, \samiterp\ne \samiterint,\\
\prefmat_{\samiterint\samiterp}+(\pbbeat{\samiterint}-\pbbeat{\upthres}), & \text{if } \samiterp=\samiterint, \samiter\ne \samiterint,\\
\prefmat_{\samiterint\samiterp} & \text{otherwise.} \\
\end{cases} \]
and $\sammean{\samiterint}'=2\thres-\sammean{\samiterint}$. It is easy to see that $\sammean{\samiterint}'\leq \thres$, and therefore $\samiterint\not\in \posset(\prefmat',\meanvec')$. We now show that the profile $\prefmat',\meanvec'$ by showing that $\pbbeat{\samiterint}'<\pbbeat{\samiterp}'$. In the new profile we have
\[\pbbeat{\samiterint}'=\frac{1}{\narm-1}\sum_{\samiterp\ne \samiterint} \prefmat'_{\samiterint\samiterp}=\frac{1}{\narm-1}\sum_{\samiterp\ne \samiterint} (\prefmat_{\samiterint\samiterp}-(\pbbeat{\samiterint}-\pbbeat{\upthres}))=\pbbeat{\samiterint}-\pbbeat{\samiterint}+\pbbeat{\upthres}=\pbbeat{\upthres}. \]
For other arms $\samiter \ne \samiterint$ we have
\[\pbbeat{\samiter}'= \frac{1}{\narm-1}\sum_{\samiterp\ne \samiter} \prefmat'_{\samiter\samiterp} =\pbbeat{\samiter}+\frac{1}{\narm-1} (\pbbeat{\samiterint}-\pbbeat{\upthres}). \]
And therefore $\pbbeat{\samiterint}'=\pbbeat{\upthres}<\pbbeat{\samiter}'$ for any $\samiter\in \posset(\prefmat',\meanvec')$, and therefore $(\prefmat',\meanvec')$ is feasible. Also since $\prefmat_{\samiter\samiterp}\in [\frac{3}{8},\frac{5}{8}]$ we have
\[\prefmat'_{\samiter\samiterp}\leq \frac{5}{8}+(\frac{5}{8}-\frac{3}{8})=\frac{7}{8}, \]
and similarly $\prefmat'_{\samiter\samiterp}\geq \frac{1}{8}$. So for any $\samiterp\ne \samiterint$ we have
\begin{equation}
d(\prefmat_{\samiterint\samiterp},\prefmat'_{\samiterint\samiterp})\leq \frac{(\prefmat_{\samiterint\samiterp}-\prefmat'_{\samiterint\samiterp})^2}{\prefmat'_{\samiterint\samiterp}(1-\prefmat'_{\samiterint\samiterp})}=10(\pbbeat{\samiterint}-\pbbeat{\upthres})^2 \label{eqn:boundd}
\end{equation}
Now consider the sums on the LHS of (\ref{eqn:dpr}). Note that $\prefmat'_{\samiter\samiterp}=\prefmat_{\samiter\samiterp}$ when $\samiter\ne \samiterint,\samiterp\ne \samiterint$; and also $\sammean{\samiterint}-\sammean{\samiterint}'=2(\sammean{\samiterint}-\thres)$ and $\sammean{\samiter}-\sammean{\samiter}'=0$ for $\samiter\ne\samiterint$. Combining (\ref{eqn:dpr}) and the uniform bound in (\ref{eqn:boundd}) we have
\begin{align*}
& \sum_{\samiter=1}^\narm \sum_{\samiterp=\samiter+1}^\narm \E_{\prefmat,\meanvec}[\dueltime{\samiter\samiterp}]d(\prefmat_{\samiter\samiterp},\prefmat'_{\samiter\samiterp})+\frac{1}{2R^2}\sum_{\samiter=1}^{\narm}(\sammean{\samiter}-\sammean{\samiter}')^2\\
\leq &\;  10(\pbbeat{\samiterint}-\pbbeat{\upthres})^2\sum_{\samiterp\ne \samiterint} \E_{\prefmat,\meanvec}[\dueltime{\samiterint\samiterp}]+\frac{2(\sammean{\samiterint}-\thres)^2}{\sgpara^2} \E[\pulltime{\samiterint}]\\
=&\;10(\pbbeat{\samiterint}-\pbbeat{\upthres})^2 \E_{\prefmat,\meanvec}[\dueltime{\samiterint}]+\frac{2(\sammean{\samiterint}-\thres)^2}{\sgpara^2} \E[\pulltime{\samiterint}]
\end{align*}
Combining the expectations we get the desired results.
\subsection{Proof of Corollary \ref{col:lower_lowpull}}
The corollary follows directly from Theorem \ref{thm:lower}: For $\samiterint\not\in \{\upthres,\lowthres\}$ we have $\pulltime{\samiterint}=0$, and therefore 
\[\E_{\prefmat,\meanvec}[\dueltime{\samiterint}^{\algo}]\geq \frac{c\log(\frac{1}{2\tol})}{(\compgap{\samiterint})^2}.\]
Sum this over all arm $\samiterint\not\in \{\upthres,\lowthres\}$ we get the desired result.


\subsection{Proof of Proposition \ref{prop:massart}}
Under the link function assumption we have
\begin{align*}
\pbbeat{\upthres}-\pbbeat{\lowthres}&=\frac{1}{\narm-1}\sum_{\samiter\ne \upthres} \linkfunc(\sammean{\upthres}-\sammean{\samiter})-\frac{1}{\narm-1}\sum_{\samiter\ne \lowthres} \linkfunc(\sammean{\lowthres}-\sammean{\samiter})\\
&\geq \frac{1}{\narm-1}\sum_{\samiter=1}^\narm\left[ \linkfunc(\sammean{\upthres}-\sammean{\samiter})-\linkfunc(\sammean{\lowthres}-\sammean{\samiter})\right]\\
&\geq \frac{\narm}{\narm-1} L(\sammean{\upthres}-\sammean{\lowthres})\geq 2L\massartnoise.
\end{align*}
For any $\samiter\in \posset$, use $\samiterp=\upthres$ and we have $\diffcomp_{\samiter}\geq \min\{2L\massartnoise,\compgap{\samiter} \}$, and this holds similarly for $\samiter\in \negset$. Finally for iii), notice that $2L\massartnoise\leq 1$ because otherwise $\sigma(2\massartnoise)>1$, and $\compgap{\samiter}\leq 1$. Thus we have $\diffcomp_{\samiter}\geq 2L\massartnoise\compgap{\samiter}$, and it leads to iii).

\subsection{Proof for Example 1}
The results follow easily from Theorem \ref{thm:upper}: We have $\labelgap{\samiter}=|\sammean{\samiter}-\tau|\geq \frac{1}{6}$ for every arm $\samiter$. Under the linear link function, we have $\pbbeat{\samiter}-\pbbeat{\samiterp}=\Theta(\sammean{\samiter}-\sammean{\samiterp})$, and thus $\diffcomp_{\samiter} = \Omega(1)$ 
for every arm $\samiter\not\in \{l,l+1\}$. Therefore $\dlcpl=O(\narm)$ and $\compcpl=O(\narm)$, and the results follow.
\subsection{Proof for Example 2}
For each $x_\samiter$, we have $\Pr[x\leq 1/4]\leq \tol/(4l)$. Using a union bound, we have that with probability $1-\tol/2$ we have $x_\samiter\leq 1/4 \forall \samiter\in [\narm]$. Let this event be $E_B$. So under $E_B$ all sample means are in $[0,1/4]$ and $[3/4,1]$, so pull-only algorithm requires $\Omega(\narm)$ pulls.

On the other hand, let $x_{(l)}$ and $x_{(l-1)}$ be the $l$-th and $(l-1)$-th order statistic of $x_1,...,x_l$, i.e., the largest and second largest element of $x_1,...,x_l$. Then $x_{(l)}-x_{(l-1)}$ is distributed according to a exponential distribution with parameter $\lambda$. Routine calculation shows that
\[\Pr[x_{(l)}-x_{(l-1)}\geq \frac{-\log(1-\tol/4)}{\lambda}]\geq 1-\tol/4. \]
Plug in $\lambda$ and $E_B$ we have
\[\Pr[x_{(l)}-x_{(l-1)}\geq \frac{-\log(1-\tol/4)}{4\log(4l/\tol)},E_B]\geq 1-\tol/2. \]
Under this event and symmetrically for $l+1\leq \samiter\leq 2l$, we have $\compcpl=O(\narm \log^2 \narm)$; thus $n_{\text{duel}}=O(\narm \log^3\narm)$ and $n_{\text{pull}}=O(\log^2 \narm) $.
\end{document}